\newcommand{\rmnum}[1]{\romannumeral #1}
\newcommand{\Rmnum}[1]{\expandafter\@slowromancap\romannumeral #1@}
\newtheorem{theorem}{\textbf{Theorem}}
\newtheorem{lemma}{\textbf{Lemma}}
\newtheorem{definition}{\textbf{Definition}}
\newtheorem{remark}{\textbf{Remark}}
\def\saveenum{\xdef\@savedenum{\the\c@enumi\relax}}
\def\resetenum{\global\c@enumi\@savedenum}
\title{Defending SVMs Against Poisoning Attacks: The Hardness and DBSCAN Approach}
\author[]{Hu Ding} 
\author[]{Fan Yang}
\author[]{Jiawei Huang}
\affil[]{%
    School of Computer Science and Technology\\
    University of Science and Technology of China\\
    He Fei, China \\
    \texttt{huding@ustc.edu.cn, \{yang208, hjw0330\}@mail.ustc.edu.cn}
}
\begin{document}
\maketitle

\begin{abstract}
Adversarial machine learning has attracted a great amount of attention in recent years. 
Due to the great importance of support vector machines (SVM) in machine learning, we consider defending SVM against poisoning attacks in this paper. We study two commonly used strategies for defending: designing robust SVM algorithms and data sanitization. Though several robust SVM algorithms have been proposed before, most of them either are in lack of adversarial-resilience, or rely on strong assumptions about the data distribution or the attacker's behavior. Moreover, the research on the hardness of designing a quality-guaranteed adversarially-resilient SVM algorithm is still quite limited. We are the first, to the best of our knowledge, to prove that even the simplest hard-margin one-class SVM with adversarial outliers problem is NP-complete, and has no fully PTAS unless P=NP. 
For data sanitization, we explain the effectiveness of DBSCAN (as a density-based outlier removal method) for defending against poisoning attacks. In particular, we link it to the intrinsic dimensionality by proving a sampling theorem in doubling metrics. In our empirical experiments, we systematically compare several defenses including the DBSCAN and robust SVM methods, and investigate the influences from the intrinsic dimensionality and poisoned fraction to their performances.
\end{abstract}

\vspace{-0.1in}

\section{Introduction}
\label{sec-intro}
\vspace{-0.1in}
In the past decades we have witnessed enormous progress in machine learning.  One driving force behind this is the successful applications of machine learning technologies to many different fields, such as data mining, networking, and bioinformatics. However, with its territory rapidly enlarging, machine learning has also imposed a number of new challenges. In particular,   {\em  adversarial machine learning} which concerns about the potential vulnerabilities of the algorithms, has attracted a great amount of attention~\citep{DBLP:conf/ccs/BarrenoNSJT06,huang2011adversarial,biggio2018wild,DBLP:journals/cacm/GoodfellowMP18}. 
As mentioned in the survey paper~\citep{biggio2018wild}, the very first work of adversarial machine learning dates back to 2004, in which \cite{DBLP:conf/kdd/DalviDMSV04} formulated the adversarial classification problem as a game between the classifier and the adversary. In general, the adversarial attacks against machine learning can be categorized to \textbf{evasion attacks} and \textbf{poisoning attacks}~\citep{biggio2018wild}. 
An evasion attack happens at test time, where the adversary aims to evade the trained classifier by manipulating test examples. For example, \cite{DBLP:journals/corr/SzegedyZSBEGF13} observed that small perturbation to a test image can arbitrarily change the  neural network's prediction.


In this paper, we focus on poisoning attacks that happen at training time. Usually, the adversary injects a small number of  specially crafted samples into the training data which can make the decision boundary severely deviate and cause unexpected misclassification. In particular, because  open datasets are commonly used to train our machine learning algorithms nowadays, poisoning attack has become a key security issue that seriously limits real-world applications~\citep{biggio2018wild}. 
For instance, even a small number of poisoning samples can significantly increase the test error of support vector machine (SVM)~\citep{DBLP:conf/icml/BiggioNL12,DBLP:conf/aaai/MeiZ15,DBLP:conf/ecai/XiaoXE12}. Beyond linear classifiers, a number of works studied the poisoning attacks for other machine learning problems, such as clustering~\citep{DBLP:conf/sspr/BiggioBPMMPR14}, PCA~\citep{DBLP:conf/imc/RubinsteinNHJLRTT09}, and regression~\citep{DBLP:conf/sp/JagielskiOBLNL18}.

Though lots of works focused on constructing poisoning attacks, our ultimate goal is to design defenses. Poisoning samples can be regarded as {\em outliers}, and this leads to two natural approaches to defend: \textbf{(1) data sanitization defense}, {\em i.e.,} first perform outlier removal and then run an existing machine learning algorithm on the cleaned data~\citep{DBLP:conf/sp/CretuSLSK08},  or \textbf{(2) directly design a robust optimization algorithm that is resilient against outliers}~\citep{DBLP:journals/jmlr/ChristmannS04,DBLP:conf/sp/JagielskiOBLNL18}. 


\cite{DBLP:conf/nips/SteinhardtKL17} studied two basic methods of data sanitization defense, which remove the points outside a specified sphere or slab, for binary classification; they showed that high dimensionality gives attacker more room for constructing attacks to evade outlier removal.  \cite{DBLP:journals/corr/LaishramP16} applied the seminal DBSCAN (Density-Based Spatial Clustering of Applications with Noise) method~\citep{ester1996density} to remove outliers for SVM and showed that it can successfully identify most of the poisoning data. However, their DBSCAN approach is lack of theoretical analysis. Several other outlier removal methods for fighting poisoning attacks have also been studied recently~\citep{DBLP:conf/pkdd/PaudiceML18,DBLP:journals/corr/abs-1802-03041}. Also, it is worth noting that outlier removal actually is a topic that  has been extensively studied in various fields before~\citep{chandola2009anomaly}. 

The other defense strategy,  designing robust optimization algorithms, also has a long history in the machine learning community. A substantial part of robust optimization algorithms rely on the idea of regularization. For example,  \cite{DBLP:journals/jmlr/XuCM09} studied the relation between robustness and regularization for SVM; other robust SVM algorithms  include~\citep{DBLP:journals/prl/TaxD99,conf/aaai/XuCS06,icml2014c2_suzumura14,DBLP:conf/nips/NatarajanDRT13,DBLP:journals/pr/XuCHP17,DBLP:journals/entropy/KanamoriFT17}. However, as discussed in~\citep{DBLP:conf/aaai/MeiZ15,DBLP:conf/sp/JagielskiOBLNL18}, these approaches are not quite ideal to defend against poisoning attacks \textbf{since the outliers can be located arbitrarily in the feature space  by the adversary}. Another idea for achieving the robustness guarantee is to add strong assumptions about the data distribution or the attacker's behavior~\citep{DBLP:conf/nips/FengXMY14,DBLP:journals/pr/WeerasingheEAL19}, but these assumptions are usually not well satisfied in practice.  
An alternative approach is to explicitly remove outliers during optimization, such as the ``trimmed'' method for robust regression~\citep{DBLP:conf/sp/JagielskiOBLNL18}; but this approach often results in a challenging \textbf{combinatorial optimization problem}: if $z$ of the input $n$ data items are outliers ($z<n$), we have to consider an exponentially large number ${n\choose z}$ of  different possible cases in the adversarial setting. 

%

%
\vspace{-0.13in}
\subsection{Our Contributions}
\label{sec-contribution}
\vspace{-0.07in}

Due to the great importance  in machine learning~\citep{journals/tist/ChangL11}, we focus on defending SVM against poisoning attacks in this paper. Our contributions are twofold.

\textbf{(\rmnum{1}).} First, we consider the robust optimization approach.  To study its complexity, we only consider the hard-margin case (because the soft-margin case is more complicated and thus should have an even higher complexity). As mentioned above, we can formulate the SVM with outliers problem as a combinatorial optimization problem for achieving the \textbf{adversarial-resilience}: finding an optimal subset of $n-z$ items from the poisoned input data to achieve the largest separating margin (the formal definition is shown in Section~\ref{sec-pre}). 

Though its local optimum can be obtained by using various methods, such as the alternating minimization approach~\citep{DBLP:conf/sp/JagielskiOBLNL18}, 
it is often very challenging to achieve a quality guaranteed solution for such adversarial-resilience optimization problem. For instance, \cite{DBLP:conf/icml/SimonovFGP19} showed that unless the Exponential Time Hypothesis (ETH) fails, it is impossible not only to solve the {\em PCA with outliers} problem exactly but even to approximate it within a constant factor. 
A similar hardness result was also proved for {\em linear regression with outliers}  by \cite{DBLP:journals/algorithmica/MountNPSW14}. 
But for SVM with outliers, we are unaware of any  \textbf{hardness-of-approximation result} before. We try to bridge the gap in the current state of knowledge in Section~\ref{sec-hard}. We prove that even the simplest one-class SVM with outliers problem is NP-complete, and has no fully polynomial-time approximation scheme (PTAS) unless P$=$NP. So it is quite unlikely that one can achieve a (nearly) optimal solution in polynomial time.

\textbf{(\rmnum{2}).} Second, we investigate the DBSCAN based data sanitization defense and explain its effectiveness in theory (Section~\ref{sec-alg}). DBSCAN is one of the most popular density-based clustering methods and has been implemented for solving many real-world outlier removal problems~\citep{ester1996density,schubert2017dbscan}; roughly speaking, the inliers are assumed to be located in some dense regions and the remaining points are recognized as the outliers. Actually, the intuition of using DBSCAN for data sanitization is  straightforward~\citep{DBLP:journals/corr/LaishramP16}. We assume the original input training data (before poisoning attack) is large and dense enough in the domain $\Omega$; thus the poisoning data should be the sparse outliers together with some small clusters located outside the dense regions, which can be identified by the DBSCAN. Obviously, if the attacker has a fixed budget $z$ (the number of poisoning points), the lager the data size $n$ is, the sparser the outliers appear to be (and the more efficiently the DBSCAN performs).


Thus, to guarantee the effectiveness of the DBSCAN approach, a fundamental question in theory is what about \textbf{the lower bound of the data size $n$}  (we can assume that the original input data is a set of {\em i.i.d.} samples drawn from the domain $\Omega$). 
However, to achieve a favorable lower bound is a non-trivial task. The  VC dimension~\citep{DBLP:journals/jcss/LiLS01} of the range space induced by the Euclidean distance is high in a high-dimensional feature space, and thus the lower bound of the data size $n$ can be very large.  Our idea is motivated by the recent observations on the link between the adversarial vulnerability and the intrinsic dimensionality~\citep{DBLP:journals/corr/abs-1905-01019,DBLP:conf/wifs/AmsalegBBEHNR17,DBLP:conf/iclr/Ma0WEWSSHB18}. We prove a lower bound of $n$ that depends on the intrinsic dimension of $\Omega$ and is independent of the feature space's dimensionality. 

Our result strengthens the observation from~\cite{DBLP:conf/nips/SteinhardtKL17} who only considered the Euclidean space's dimensionality: more precisely, it is the ``high intrinsic dimensionality'' that gives attacker more room  to evade outlier removal. In particular, different from the previous results on evasion attacks~\citep{DBLP:journals/corr/abs-1905-01019,DBLP:conf/wifs/AmsalegBBEHNR17,DBLP:conf/iclr/Ma0WEWSSHB18}, our result is the first one linking poisoning attacks to intrinsic dimensionality, to the best of our knowledge.  
In Section~\ref{sec-exp}, we investigate several popular defending methods (including DBSCAN), where  the intrinsic dimension of data demonstrates significant influence on their defending performances.
\vspace{-0.1in}

\section{Preliminaries}
\label{sec-pre}
\vspace{-0.07in}

Given two point sets $P^+$ and $P^-$ in $\mathbb{R}^d$, the problem of linear \textbf{support vector machine (SVM)}~\citep{journals/tist/ChangL11} is to find the maximum margin  separating these two point sets (if they are separable). 
If $P^+$ (or $P^-$) is  a single point, say the origin, the problem is called \textbf{one-class SVM}. The SVM can be formulated as a quadratic programming problem, and a number of efficient techniques have been developed in the past, such as the soft margin SVM~\citep{mach:Cortes+Vapnik:1995}, $\nu$-SVM~\citep{bb57389,conf/nips/CrispB99}, and Core-SVM~\citep{tkc-cvmfstv-05}. If $P^+$ and $P^-$ are not separable, we can apply the  kernel method: each point $p\in P^+\cup P^-$ is mapped to be $\phi (p)$ in a higher dimensional space; the inner product $\langle\phi(p_1), \phi(p_2)\rangle$ is defined by a kernel function $\mathcal{K}(p_1, p_2)$. Many existing SVM algorithms can be adapted to handle the non-separable case by using kernel functions.

\textbf{Poisoning attacks.} The adversary usually injects some bad points to the original data set $P^+\cup P^-$. For instance, the adversary can take a sample $q$ from the domain of $P^+$, and flip its label to be ``$-$''; therefore, this poisoning sample $q$ can be viewed as an outlier of $P^-$. 
%
%
%
Since poisoning attack is expensive, we often assume that the adversary can poison at most $z\in\mathbb{Z}^+$ points (or the poisoned fraction $\frac{z}{ |P^+\cup P^- |}$ is a fixed small number in $(0,1)$). 
We can formulate the  defense against poisoning attacks as the following combinatorial optimization problem. As mentioned in Section~\ref{sec-contribution}, it is sufficient to consider only the simpler hard-margin case for studying  its hardness. 

\begin{definition} [\textbf{SVM  with Outliers}]
	\label{def-svm}
	Let $(P^+, P^-)$ be an instance of SVM in $\mathbb{R}^d$, and suppose $\big|P^+\cup P^-\big|=n$. Given a positive integer $z<n$, the problem of SVM with outliers is to find two subsets $P^+_1\subseteq P^+$ and $P^-_1\subseteq P^-$ with $\big|P^+_1\cup P^-_1\big|=n-z$, such that the width of the margin separating  $P^+_1$ and $P^-_1$ is maximized.   
	%
	
	Suppose the optimal margin has the width $h_{opt}>0$. If we achieve a solution with the margin width $h\geq (1-\epsilon)h_{opt}$ where $\epsilon$ is a small number in $(0,1)$, we say that it is a $(1-\epsilon)$-approximation. 
\end{definition}
\begin{remark}
	The model proposed in Definition~\ref{def-svm}  follows the popular \textbf{data trimming} idea from robust statistics~\citep{books/wi/RousseeuwL87}. As an example similar with Definition~\ref{def-svm}, \cite{DBLP:conf/sp/JagielskiOBLNL18} proposed a data trimming based regression model to defend  against poisoning attacks. 
\end{remark}


We also need to clarify the  intrinsic dimensionality for our following analysis. 
\textbf{Doubling dimension}  is a measure of intrinsic dimensionality that has been widely adopted in the learning theory community~\citep{DBLP:journals/jcss/BshoutyLL09}.  
Given a point $p$ and $r\geq 0$, we use $\mathbb{B}(p, r)$ to indicate the ball of radius $r$ around $p$ in the space. 

\vspace{-0.05in}

\begin{definition}[\textbf{Doubling Dimension}]
	\label{def-dd}
	The doubling dimension of a point set $P$ from some metric space\footnote{The space can be a Euclidean space or an abstract metric space.} is the smallest number $\rho$, such that for any $p\in P$ and $r\geq 0$, the set $P\cap \mathbb{B}(p, 2r)$ can always be  covered by the union of at most $2^\rho$ balls with radius $r$ in the space.
\end{definition}
\vspace{-0.05in}
	To understand doubling dimension, we consider the following simple case. If the points of $P$ distribute uniformly in a $d'$-dimensional flat in $\mathbb{R}^d$,   then it is easy to see that  $P$ has the doubling dimension $\rho=O(d')$, which is independent of the Euclidean dimension $d$ ({\em e.g.,} $d$ can be much higher than $\rho$). 
	Intuitively,  doubling dimension is  used for describing the expansion rate of a given point set in the space.  
	It is worth noting that the intrinsic dimensionality described in~\citep{DBLP:conf/wifs/AmsalegBBEHNR17,DBLP:conf/iclr/Ma0WEWSSHB18} is quite similar to  doubling dimension, which also measures expansion rate. 

\vspace{-0.05in}
\section{The Hardness of SVM with Outliers}
\label{sec-hard}
\begin{figure*}[]
	\centering
	\subfloat[]{\includegraphics[height=0.8in]{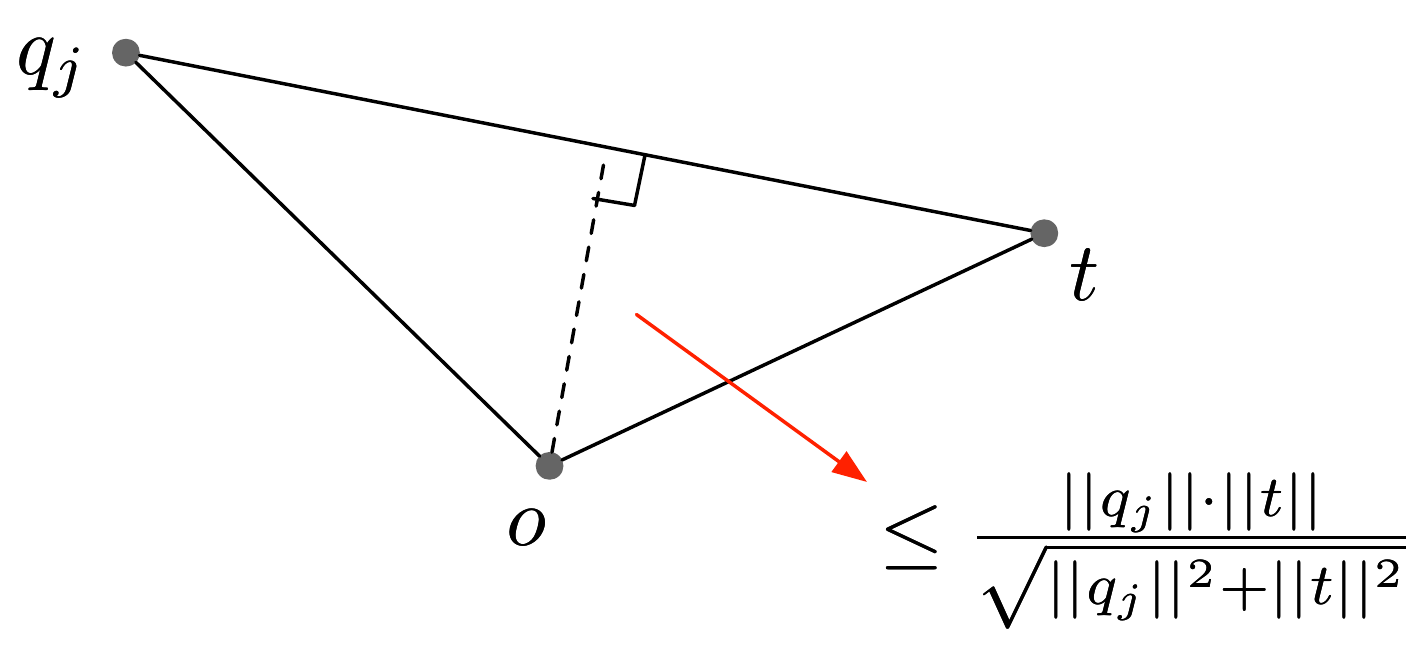}\label{fig-hard}}
	\hspace{1.2in}	
	\subfloat[]{\includegraphics[height=0.8in]{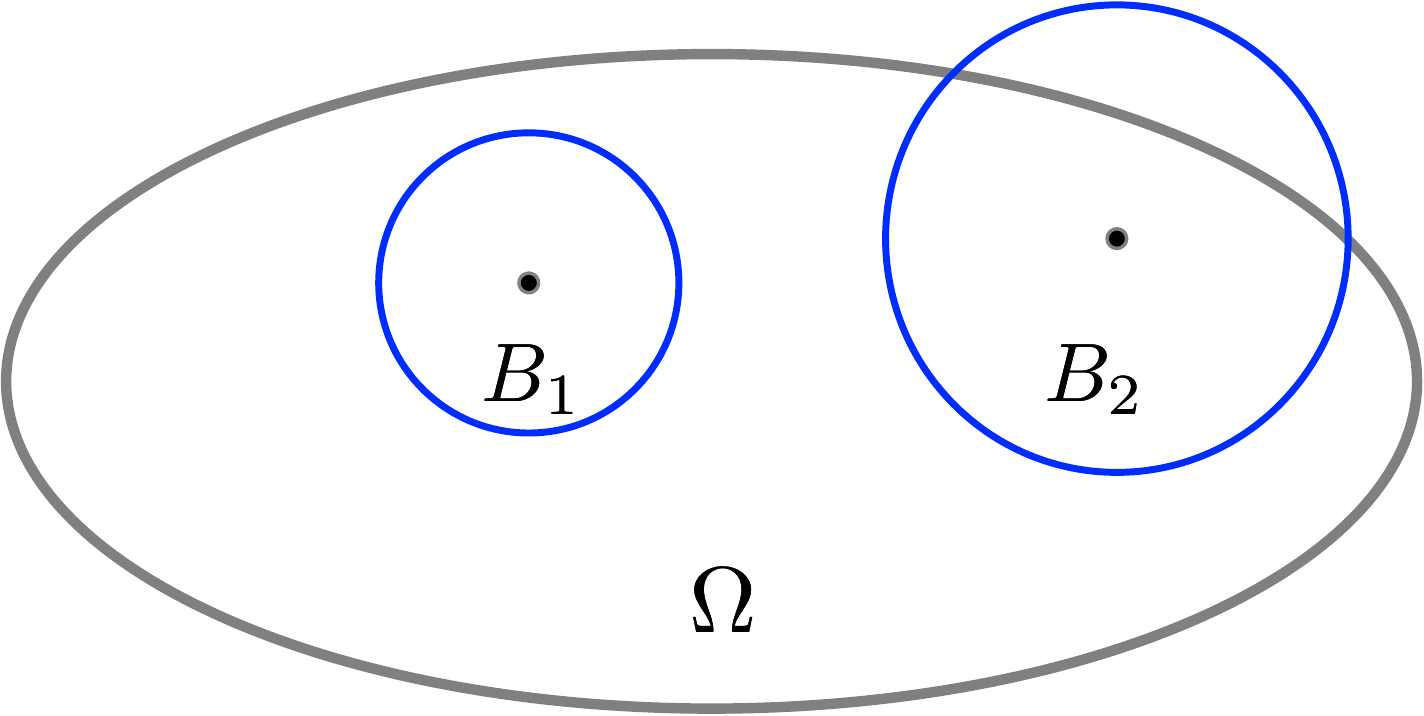}\label{fig-enclose}}
	\vspace{-0.1in}
	\caption{(a) An illustration for the formula (\ref{for-hard-1}); (b) the ball $B_1$ is enclosed by $\Omega$ and the ball $B_2$ is not.}
	\label{fig-exp2}
	\vspace{-0.2in}
\end{figure*}

In this section, we prove that even the one-class SVM with outliers problem is NP-complete and has no fully PTAS unless P$=$NP (that is, we cannot achieve a polynomial time $(1-\epsilon)$-approximation for any given $\epsilon\in(0,1)$).  Our idea is partly inspired by the result from \cite{DBLP:journals/jsc/Megiddo90}. Given a set of points in $\mathbb{R}^d$, the ``covering by two balls'' problem is to determine that whether the point set can be covered by two unit balls. By the reduction from $3$-SAT, Megiddo proved that the ``covering by two balls'' problem is NP-complete. In the proof of the following theorem, we modify Megiddo's construction of the reduction to adapt the one-class SVM with outliers problem.
\vspace{-0.05in}

\begin{theorem}
	\label{the-hard}
	The one-class SVM with outliers problem is NP-complete, and has no fully PTAS unless P$=$NP. 
\end{theorem}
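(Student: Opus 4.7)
The plan is to establish both claims via a polynomial-time reduction from 3-SAT that adapts Megiddo's ``covering by two balls'' construction, as hinted by the authors. Membership in NP is routine: a non-deterministic machine guesses the $z$-element outlier set, and the remaining $(n-z)$-point one-class SVM is a convex quadratic program whose optimum margin can be computed and compared against a threshold in polynomial time. The substantive work is to give a reduction that is simultaneously gap-preserving, so that (i) NP-hardness follows, and (ii) any FPTAS for the margin would, for sufficiently small $\epsilon$, decide 3-SAT in polynomial time and force P$=$NP.

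Given a 3-CNF formula $\phi$ with variables $x_1,\dots,x_n$ and clauses $C_1,\dots,C_m$, I would embed points in $\mathbb{R}^{n+O(1)}$ using two kinds of gadgets. For each variable $x_i$ I place a ``literal triangle'' (the configuration depicted in Figure~\ref{fig-hard}): two literal points $p_i^{T}$, $p_i^{F}$ placed symmetrically in a dedicated coordinate, together with an auxiliary apex point $q_i$. The geometry is calibrated with a chosen threshold $h^*$ so that when both $p_i^{T}$ and $p_i^{F}$ are kept, no hyperplane attains margin $\geq h^*$, whereas removing exactly one of them allows margin equal to $h^*$. For each clause $C_j=\ell_{j_1}\vee\ell_{j_2}\vee\ell_{j_3}$ I add a ``clause point'' $c_j$ in a fresh coordinate, forced to be kept, whose location is arranged so that $c_j$ can be separated from the origin with margin $\geq h^*$ only if at least one of the three literal points corresponding to $\ell_{j_1},\ell_{j_2},\ell_{j_3}$ has already been removed. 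Setting the outlier budget to $z=n$, the only economical outlier sets remove exactly one literal per variable triangle and therefore encode a truth assignment; the clause gadgets then certify margin $\geq h^*$ iff every clause is satisfied, yielding the equivalence between satisfiability of $\phi$ and the existence of an $(n-z)$-subset with margin at least $h^*$.

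For the no-FPTAS claim, the same construction must be made gap-preserving with an inverse-polynomial gap. A direct trigonometric computation on the literal triangle shows that an unsatisfied clause decreases the best achievable margin from $h^*$ to at most $h^*(1-\delta)$ for some $\delta=\Omega(1/\mathrm{poly}(n,m))$ with polynomial bit-length. Hence a $(1-\epsilon)$-approximation with $\epsilon<\delta$ distinguishes satisfiable from unsatisfiable formulas; since an FPTAS runs in time polynomial in both the input size and $1/\epsilon$, plugging in $\epsilon=\delta/2$ produces a polynomial-time 3-SAT algorithm, forcing P$=$NP.

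The hard part will be the quantitative calibration of these gadgets. Megiddo's original setup partitions the input cleanly because two balls naturally induce a bipartition, but here a single separating hyperplane imposes one joint linear constraint on all surviving points, so the inter-gadget distances, apex heights, and coordinate scales must be tuned to guarantee both (a) that every outlier set deviating from the ``one literal per triangle'' pattern is strictly worse by a polynomially bounded additive amount (ruling out cheating by removing clause points or two literals from a single triangle), and (b) that contributions of distinct clauses to the margin are essentially additive and each lower-bounded by $1/\mathrm{poly}$. Verifying this quantitative robustness is the technical core; once it is in place both NP-completeness and the inapproximability conclusion follow from the same reduction.
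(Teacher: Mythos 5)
Your overall strategy --- a gap-preserving reduction from 3-SAT adapted from Megiddo's covering-by-two-balls construction, with an inverse-polynomial gap that an FPTAS would have to resolve --- is exactly the route the paper takes, and your NP-membership argument is fine. But the proposal stops where the proof has to start: everything that makes the reduction work is deferred to an unperformed ``quantitative calibration,'' and the gadgets you propose make that calibration harder than it needs to be. In the paper the variable gadget for $u_i$ is simply the antipodal pair $\pm e_i$ of unit vectors in $\mathbb{R}^{l+1}$ with outlier budget $z=l+1$: if both $e_i$ and $-e_i$ survive, the origin lies on the segment between them and no positive separating margin exists at all, so the budget automatically forces exactly one deletion per pair and forces every clause point to be kept --- there is no threshold to tune and no ``cheating'' outlier sets to rule out. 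Your three-point literal triangle with an apex and a soft threshold $h^*$ reintroduces precisely the case analysis (two removals from one triangle, removal of a clause or apex point) that you flag as the technical core but do not carry out; until your condition (a) is verified for concrete coordinates, the forward direction of the equivalence is unproved.

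The second gap is the clause gadget and the quantitative gap bound. You assert that an unsatisfied clause drops the margin to $h^*(1-\delta)$ with $\delta=\Omega(1/\mathrm{poly}(n,m))$ ``by a direct trigonometric computation,'' but give no coordinates from which to compute anything. The paper's choice is concrete: $q_j$ has $\pm\alpha$ in the coordinates of the literals of $E_j$ and $3\alpha$ in coordinate $l+1$, so that $\langle q_j, t/\|t\|\rangle\ge 2\alpha/\sqrt{l+1}>1/\sqrt{l+1}$ when some literal of $E_j$ is true, while all literals false gives $\langle q_j, t/\|t\|\rangle\le 0$, i.e., $\angle q_j o t\ge\pi/2$, which caps the margin at $\|q_j\|\,\|t\|/\sqrt{\|q_j\|^2+\|t\|^2}$ and yields an explicit gap of $\Theta\big((l+1)^{-3/2}\big)$ relative to $h^*=1/\sqrt{l+1}$, hence $\epsilon=O(1/(l+1))$. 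Note also that placing each clause point ``in a fresh coordinate'' works against you: the clause point must interact with the literal coordinates through the single separating direction, which is exactly why the paper embeds $q_j$ in the same $l+1$ coordinates. In short: right skeleton and the same approach as the paper, but the explicit construction and both quantitative claims on which NP-hardness and the no-FPTAS conclusion rest are missing.
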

\vspace{-0.1in}

	Let $\Gamma$ be a $3$-SAT instance with the literal set $\{u_1, \bar{u}_1,\cdots, u_l, \bar{u}_l\}$ and clause set $\{E_1, \cdots, E_m\}$. We construct the corresponding instance $P_\Gamma$ of one-class SVM with outliers. First, let $U=\{\pm e_i\mid i=1, 2, \cdots, l+1\}$ be the $2(l+1)$ unit vectors of $\mathbb{R}^{l+1}$, where each $e_i$ has ``$1$'' in the $i$-th position and ``$0$'' in other positions. Also, for each clause $E_j$ with $1\leq j\leq m$, we generate a point $q_j=(q_{j,1}, q_{j,2}, \cdots, q_{j, l+1})$ as follows. For $1\leq i\leq l$, 
	
	\[
	q_{j, i}=\left\{
	\begin{array}{ll}
		\alpha,&\text{ if  $u_i$ occurs in $E_j$; }\\
		-\alpha, &\text{ else if $\bar{u}_i$ occurs in $E_j$;}\\
		0, &\text{ otherwise.}
	\end{array}
	\right.
	\]	
	
	In addition, $q_{j, l+1}=3\alpha$. For example, if $E_j=u_{i_1}\vee \bar{u}_{i_2}\vee u_{i_3}$, the point 
	\begin{eqnarray}
		q_j=(0, \cdots, 0, \underset{i_1}{\alpha}, 0, \cdots, 0, \underset{i_2}{-\alpha},0, \cdots,\nonumber\\
		0, \underset{i_3}{\alpha}, 0, \cdots, 0, 3\alpha).
	\end{eqnarray}
	The value of $\alpha$ will be determined later. Let $Q$ denote the set $\{q_1, \cdots, q_m\}$. Now, we construct the instance $P_\Gamma=U\cup Q$ of one-class SVM with outliers, where the number of points $n=2(l+1)+m$ and the number of outliers $z=l+1$.  
	Then we have the following lemma.
	
	\begin{lemma}
		\label{lem-hard}
		Let $\alpha>1/2$. $\Gamma$ has a satisfying assignment if and only if $P_\Gamma$ has a solution with margin width $\frac{1}{\sqrt{l+1}}$.
	\end{lemma}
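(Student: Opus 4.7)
The plan is a biconditional: truth assignments of $\Gamma$ should correspond to separating directions for $P_\Gamma$ in which exactly one vector from each antipodal pair $\{+e_i,-e_i\}$ is discarded. Since there are $l+1$ such pairs and the outlier budget is $z=l+1$, the budget is exactly consumed on $U$ and every clause point $q_j$ must remain an inlier; the ``keep one of two'' choice for $i\le l$ will encode whether $u_i$ is true. The value $\alpha>1/2$ is tuned precisely so that the real-valued margin condition collapses to the integer condition ``at least one literal per clause is satisfied''.

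For the forward direction, given a satisfying assignment $\tau$ I would set $w=\frac{1}{\sqrt{l+1}}(\sigma_1,\ldots,\sigma_l,1)$, where $\sigma_i=+1$ if $\tau(u_i)=T$ and $\sigma_i=-1$ otherwise. Then $\|w\|=1$, and for each pair $\{+e_i,-e_i\}$ the inner products with $w$ are $\pm 1/\sqrt{l+1}$, so dropping the negative one from each pair uses exactly the $l+1$ outlier slots. A direct case analysis shows that a satisfied literal in $E_j$ contributes $+\alpha/\sqrt{l+1}$ to $\langle w,q_j\rangle$ while an unsatisfied literal contributes $-\alpha/\sqrt{l+1}$; combined with the last-coordinate contribution $3\alpha/\sqrt{l+1}$, if $s_j$ literals of $E_j$ are satisfied one obtains $\langle w,q_j\rangle = 2s_j\alpha/\sqrt{l+1}$, which is $>1/\sqrt{l+1}$ whenever $s_j\ge 1$ and $\alpha>1/2$. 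Hence every $q_j$ is an inlier and the margin is at least $1/\sqrt{l+1}$.

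For the converse, suppose a unit direction $w$ and kept set of size $n-z$ attain margin $\ge 1/\sqrt{l+1}$. The key structural step is to rule out that any $q_j$ is an outlier. If $w_i=0$ for some $i$ then both $\pm e_i$ have inner product $0<1/\sqrt{l+1}$ and must both be removed, which together with at least one removal from each of the other $l$ pairs forces at least $l+2>z$ deletions, a contradiction. So $w_i\neq 0$ and exactly one of $\pm e_i$ has negative inner product; this one must be removed. These $l+1$ forced removals exhaust the budget, so every $q_j$ is kept and $|w_i|\ge 1/\sqrt{l+1}$ for all $i$. Summing the squares and using $\|w\|=1$ forces $|w_i|=1/\sqrt{l+1}$ throughout, and since $+e_{l+1}$ is kept we have $w_{l+1}=+1/\sqrt{l+1}$. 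Writing $w_i=\sigma_i/\sqrt{l+1}$ with $\sigma_i\in\{\pm 1\}$, I define $\tau(u_i)=T$ iff $\sigma_i=+1$. Replaying the case analysis gives $\langle w,q_j\rangle=2s_j\alpha/\sqrt{l+1}$, so the margin inequality reads $s_j\ge 1/(2\alpha)$; since $\alpha>1/2$ makes $1/(2\alpha)<1$ and $s_j\in\{0,1,2,3\}$, this is exactly $s_j\ge 1$, so every clause is satisfied.

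The main obstacle I expect is precisely the structural rigidity step in the converse: I must prevent an optimal separator from ``saving'' outlier budget by discarding some $q_j$'s in place of vectors from $U$, and then force the resulting $w$ into the highly constrained coordinate form $\pm 1/\sqrt{l+1}$. Once this rigidity is in place, the choice $\alpha>1/2$ cleanly turns the real-valued margin threshold into the discrete condition on the number of satisfied literals, closing the reduction.
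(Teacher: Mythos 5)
Your argument is correct in substance, and the forward direction coincides with the paper's (your unit vector $w=\frac{1}{\sqrt{l+1}}(\sigma_1,\dots,\sigma_l,1)$ is exactly the normalized mean point $t/\|t\|$ of the kept half of $U$ used there, and the computation $\langle w,q_j\rangle=2s_j\alpha/\sqrt{l+1}$ matches the paper's lower bound $2\alpha/\sqrt{l+1}$). The converse, however, is argued by a genuinely different route. The paper never identifies the separating direction: it takes the mean $t$ of the kept vectors of $U$ (so $\|t\|=1/\sqrt{l+1}$ automatically), observes that an unsatisfied clause forces $\langle q_j, t/\|t\|\rangle\le 0$, i.e.\ $\angle q_j o t\ge \pi/2$, and concludes via the right-triangle altitude bound $\frac{\|q_j\|\|t\|}{\sqrt{\|q_j\|^2+\|t\|^2}}<\|t\|$ that the distance from $o$ to $\mathrm{conv}(S\cup Q)$ is strictly below $1/\sqrt{l+1}$. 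You instead prove rigidity of the witness direction: the budget forces one removal per antipodal pair, the margin constraints give $|w_i|\ge 1/\sqrt{l+1}$, and $\|w\|=1$ pins down $|w_i|=1/\sqrt{l+1}$ exactly, after which the clause condition is read off coordinatewise. Your version is more self-contained (no convex-hull/angle geometry) and yields the exact value of every $\langle w,q_j\rangle$; the paper's version avoids having to characterize $w$ at all and its angle inequality is precisely what later supplies the quantitative gap used to rule out a fully PTAS.

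One step you assert without justification: that it is $+e_{l+1}$ rather than $-e_{l+1}$ that survives, so that $w_{l+1}=+1/\sqrt{l+1}$. This needs (and admits) a one-line argument: if $-e_{l+1}$ were kept then $w_{l+1}=-1/\sqrt{l+1}$ and every clause point would satisfy $\langle w,q_j\rangle\le -3\alpha/\sqrt{l+1}+3\alpha/\sqrt{l+1}=0<1/\sqrt{l+1}$, contradicting the fact (which you already established) that all of $Q$ must be kept. With that line added, your proof is complete.
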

	\begin{proof}
		\textbf{First, we suppose there exists a satisfying assignment for $\Gamma$.} We define the set $S\subset P_{\Gamma}$ as follows. If $u_i$ is true in $\Gamma$, we include $e_i$ in $S$, else, we include $-e_i$ in $S$; we also include $e_{l+1}$ in $S$.  We claim that the set $S\cup Q$ yields a solution of the instance $P_\Gamma$ with the margin width $\frac{1}{\sqrt{l+1}}$, that is, the size $|S\cup Q|=n-z$ and the margin separating the origin $o$ and $S\cup Q$ has width $\frac{1}{\sqrt{l+1}}$. It is easy to verify the size of $S\cup Q$. To compute the width, we consider the mean point of $S$ which is denoted as $t$. For each $1\leq i\leq l$, if $u_i$ is true, the $i$-th position of $t$ should be $\frac{1}{l+1}$, else, the $i$-th position of $t$ should be $-\frac{1}{l+1}$; the $(l+1)$-th position of $t$ is $\frac{1}{l+1}$. Obviously, $||t||=\frac{1}{\sqrt{l+1}}$. Let $\mathcal{H}_t$ be the hyperplane that is orthogonal to the vector $t-o$ and passing through $t$. So $\mathcal{H}_t$ separates $S$ and $o$ with the margin width $||t||=\frac{1}{\sqrt{l+1}}$. Furthermore, for any point $q_j\in Q$, since there exists at least one true variable in $E_j$, we have the inner product 
		\begin{eqnarray}
			\langle q_j, \frac{t}{||t||}\rangle &\geq& \frac{3\alpha}{\sqrt{l+1}}+\frac{\alpha}{\sqrt{l+1}}-\frac{2\alpha}{\sqrt{l+1}}\nonumber\\
			&=&\frac{2\alpha}{\sqrt{l+1}}
			>\frac{1}{\sqrt{l+1}},
		\end{eqnarray}
		where the last inequality comes from the fact $\alpha>1/2$. Therefore, all the points from $Q$ lie on the same side of $\mathcal{H}_t$ as $S$, and then the set $S\cup Q$ can be separated from $o$ by a margin with width $\frac{1}{\sqrt{l+1}}$.

		\textbf{Second, suppose the instance $P_\Gamma$ has a solution with margin width $\frac{1}{\sqrt{l+1}}$.} With a slight abuse of notations, we still use $S$ to denote the subset of $U$ that is included in the set of $n-z$ inliers. Since the number of outliers is $z=l+1$, we know that for any pair $\pm e_i$, there exists exactly one point belonging to $S$; also, the whole set $Q$ should be included in the set of inliers so as to guarantee that there are $n-z$ inliers in total. We still use $t$ to denote the mean point of $S$ ($||t||=\frac{1}{\sqrt{l+1}}$). Now, we design the assignment for $\Gamma$: if $e_i\in S$, we assign $u_i$ to be true, else, we assign $\bar{u}_i$ to be true. We claim that $\Gamma$ is satisfied by this assignment. For any clause $E_j$, if it is not satisfied, {\em i.e.,} all the three variables in $E_j$ are false, then we have the inner product
		\begin{eqnarray}
			\langle q_j, \frac{t}{||t||}\rangle\leq  \frac{3\alpha}{\sqrt{l+1}}-\frac{3\alpha}{\sqrt{l+1}}=0.
		\end{eqnarray}
		That means the angle $\angle q_j o t\geq \pi/2$. So any margin separating the origin $o$ and the set $S\cup Q$ should has the width at most 
		\begin{eqnarray}
			\frac{||q_j||\cdot ||t||}{\sqrt{||q_j||^2+||t||^2}}<||t||=\frac{1}{\sqrt{l+1}}.\label{for-hard-1}
		\end{eqnarray}
		See Figure~\ref{fig-hard} for an illustration. This is in contradiction to the assumption that $P_\Gamma$ has a solution with margin width $\frac{1}{\sqrt{l+1}}$.

		Overall, $\Gamma$ has a satisfying assignment if and only if $P_\Gamma$ has a solution with margin width $\frac{1}{\sqrt{l+1}}$. 	
	\end{proof}
		\vspace{-0.05in}
	Now we are ready to prove the theorem. 
	\vspace{-0.05in}
	\begin{proof}(\textbf{of Theorem~\ref{the-hard}})
	Since 3-SAT is NP-complete, Lemma~\ref{lem-hard} implies that the one-class SVM with outliers problem is NP-complete too; otherwise, we can determine that whether a given instance $\Gamma$ is satisfiable by computing the optimal solution of $P_\Gamma$. Moreover, the gap between $\frac{1}{\sqrt{l+1}}$ and $\frac{||q_j||\cdot ||t||}{\sqrt{||q_j||^2+||t||^2}}$ (from the formula (\ref{for-hard-1})) is 
	\begin{eqnarray}
		&\frac{1}{\sqrt{l+1}}- \sqrt{\frac{12\alpha^2\frac{1}{l+1}}{12\alpha^2+\frac{1}{l+1}}}\nonumber\\
		=&(\frac{1}{l+1})^{3/2}\frac{1}{\sqrt{12\alpha^2+\frac{1}{l+1}}(\sqrt{12\alpha^2+\frac{1}{l+1}}+2\sqrt{3}\alpha)}\nonumber\\
		=&\Theta\big((\frac{1}{l+1})^{3/2}\big),
	\end{eqnarray}
	if we assume $\alpha$ is a fixed constant. Therefore, if we set $\epsilon=O\big(\frac{(\frac{1}{l+1})^{3/2}}{(\frac{1}{l+1})^{1/2}}\big)=O(\frac{1}{l+1})$, then $\Gamma$ is satisfiable if and only if any $(1-\epsilon)$-approximation of the instance $P_\Gamma$ has width $>\sqrt{\frac{12\alpha^2\frac{1}{l+1}}{12\alpha^2+\frac{1}{l+1}}}$. That means if we have a fully PTAS for the one-class SVM with outliers problem, we can determine that whether $\Gamma$ is satisfiable or not in polynomial time. 
	In other words, we cannot even achieve a fully PTAS for one-class SVM with outliers, unless P$=$NP. 
\end{proof}

\section{The Data Sanitization Defense}  
\label{sec-alg}

From Theorem~\ref{the-hard}, we know that it is extremely challenging to achieve the optimal solution even for one-class SVM with outliers. Therefore, we turn to consider the other approach, data sanitization defense, under some reasonable assumption in practice.
First, we prove a general sampling theorem in Section~\ref{sec-the}. 
Then, we apply this theorem to explain the effectiveness of DBSCAN for defending against poisoning attacks in Section~\ref{sec-our}. 

\subsection{A Sampling Theorem}
\label{sec-the}

Let $P$ be a set of {\em i.i.d.} samples drawn from a connected and compact domain $\Omega$ who has the doubling dimension $\rho>0$. For ease of presentation, we assume that $\Omega$ lies on a manifold $\mathcal{F}$ in the space. Let $\Delta$ denote the diameter of $\Omega$, {\em i.e.,} $\Delta=\sup_{p_1, p_2\in\Omega}||p_1-p_2||$. Also, we let $f$ be the probability density function of the data distribution over $\Omega$. 

To measure the uniformity of $f$, we define a value $\lambda$ as follows. For any  $c\in \Omega$ and any $r>0$, we say ``the ball $\mathbb{B}(c, r)$ is enclosed by $\Omega$'' if $\partial \mathbb{B}(c, r)\cap \mathcal{F}\subset\Omega$; intuitively, if the ball center $c$ is close to the boundary $\partial \Omega$ of $\Omega$ or the radius $r$ is too large, the ball will not be enclosed by $\Omega$. See Figure~\ref{fig-enclose} for an illustration. We define $\lambda\coloneqq\sup_{c, c', r}\frac{\int_{\mathbb{B}(c', r)}\! f(x) \, \mathrm{d}x}{\int_{\mathbb{B}(c, r)}\! f(x) \, \mathrm{d}x}$, where $\mathbb{B}(c, r)$ and $\mathbb{B}(c', r)$ are any two equal-sized balls, and $\mathbb{B}(c, r)$ is required to be enclosed by $\Omega$. As a simple example, if $\Omega$ lies on a flat manifold and the data  uniformly distribute over $\Omega$, the value $\lambda$ will be equal to $1$. On the other hand, if the distribution is very imbalanced or the manifold $\mathcal{F}$ is very rugged, the value $\lambda$ can be high. 

\begin{theorem}
	\label{the-sample}
	Let $m\in\mathbb{Z}^+$, $\epsilon\in (0,\frac{1}{8})$, and $\delta\in (0, \Delta)$. If the sample size 
	\begin{eqnarray} 
		|P|> \max\Big\{\Theta\big(\frac{m}{1-\epsilon}\cdot\lambda\cdot(\frac{1+\epsilon}{1-\epsilon}\frac{\Delta}{\delta})^\rho\big),\nonumber\\ \tilde{\Theta}\big(\rho\cdot\lambda^2\cdot(\frac{1+\epsilon}{1-\epsilon}\frac{\Delta}{\delta})^{2\rho}(\frac{1}{\epsilon})^{\rho+2}\big)\Big\},
	\end{eqnarray}
	then with constant probability, for any ball $\mathbb{B}\big(c,  \delta\big)$ enclosed by $\Omega$, the size $|\mathbb{B}\big(c,  \delta\big)\cap P|> m$. 
	The asymptotic notation $\tilde{\Theta}(f)=\Theta\big(f\cdot \mathtt{polylog}(\frac{L\Delta}{\delta\epsilon})\big)$.
\end{theorem}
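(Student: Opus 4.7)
The plan is to handle the infinite family of enclosed $\delta$-balls by first building a doubling-dimension-based net of $\Omega$, reducing the claim to a uniform concentration statement over this finite net, and then combining a per-ball Chernoff inequality with a union bound. The doubling dimension plays two distinct roles in the argument: it bounds the size of the net (the discretization cost) and it bounds the number of balls needed to cover $\Omega$ (used in a pigeonhole step to lower-bound the probability mass of any enclosed ball).

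First, I would iterate Definition~\ref{def-dd} to build an $(\epsilon\delta)$-net $N_\xi\subseteq \Omega$ of size $|N_\xi|=O\bigl((\Delta/(\epsilon\delta))^\rho\bigr)$, so that every $q\in\Omega$ lies within $\epsilon\delta$ of some net point. For an arbitrary enclosed ball $\mathbb{B}(c,\delta)$, picking the nearest net point $p$ and invoking the triangle inequality gives the inclusion $\mathbb{B}(p,(1-\epsilon)\delta)\subseteq \mathbb{B}(c,\delta)$; since the outer ball is enclosed by $\Omega$ and $\Omega$ sits as a connected region on the manifold $\mathcal{F}$, the shrunken inner ball inherits the enclosed property. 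Hence it is enough to control the sample count in the shrunken, net-centered, enclosed balls indexed by $N_\xi$, which is a finite collection.

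Next, I would lower-bound the probability mass of each such shrunken ball. Using the doubling condition a second time, $\Omega$ can itself be covered by $N'=O\bigl((\Delta/((1-\epsilon)\delta))^\rho\bigr)$ balls of radius $(1-\epsilon)\delta$, so by pigeonhole at least one of them carries mass at least $1/N'$ under $f$; invoking the definition of $\lambda$ then lifts this into the uniform lower bound $p_{\min}\geq c_0\,\lambda^{-1}\bigl((1-\epsilon)\delta/\Delta\bigr)^{\rho}$ on \emph{every} enclosed ball of that radius. A multiplicative Chernoff inequality then shows that, for each relevant net point $p$, the sample count $S_p$ in $\mathbb{B}(p,(1-\epsilon)\delta)$ obeys $\Pr[S_p\leq m]\leq \exp(-c\,|P|\,p_{\min})$ whenever $|P|\,p_{\min}$ exceeds $m$ by a constant factor. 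Taking a union bound over the $|N_\xi|$ net points then imposes two requirements on $|P|$: the expected count $|P|p_{\min}$ must dominate $m$ (yielding the first bound of the theorem, with the $\bigl(\tfrac{1+\epsilon}{1-\epsilon}\bigr)^{\rho}$ factor absorbing both the net slack and the $(1-\epsilon)$ shrinkage inside $1/p_{\min}$), and the Chernoff exponent must dominate $\log|N_\xi|=\Theta(\rho\log(\Delta/(\epsilon\delta)))$ (yielding the second bound, with $\tilde\Theta$ absorbing the polylogarithmic overhead).

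The main obstacle will be balancing the two sources of slack, namely the radius shrinkage $\mathbb{B}(c,\delta)\leadsto\mathbb{B}(p,(1-\epsilon)\delta)$ from the net and the multiplicative deviation $\mathbb{E}[S_p]\to m$ from the Chernoff step, so that the stated symmetric $(1+\epsilon)/(1-\epsilon)$-bound drops out cleanly with a single parameter $\epsilon$; I would pick the net resolution and the Chernoff slack jointly and then tune them. A secondary subtlety is making the inheritance of the ``enclosed'' property from the outer ball to the inner ball rigorous (so that the $\lambda$-step is applicable to the shrunken ball), which uses compactness and connectedness of $\Omega$ on $\mathcal{F}$.
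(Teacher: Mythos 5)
Your proposal is correct in outline, but it takes a genuinely different route from the paper on the key step of handling the ``for all enclosed balls'' quantifier. The paper does not discretize: it invokes the result of Huang et al.\ (Theorem~\ref{the-jian}) that, after replacing the Euclidean metric by an $\epsilon$-smoothed distance $g$, the range space of all balls has VC dimension $\tilde{O}(\rho/\epsilon^\rho)$, and then applies the standard $\epsilon_0$-sample theorem to get a \emph{uniform two-sided additive} guarantee $\big|\,|\mathbb{B}_g(c,\delta')\cap P|/|P| - \mu(\mathbb{B}_g(c,\delta'))\,\big|\le\epsilon_0$ over all smoothed balls simultaneously, with $\epsilon_0=\epsilon\cdot\Theta\big(\frac{1}{\lambda}(\frac{1-\epsilon}{1+\epsilon}\frac{\delta}{\Delta})^\rho\big)$; the sandwich $\mathbb{B}(c,\frac{\delta'}{1+\epsilon})\subseteq\mathbb{B}_g(c,\delta')\subseteq\mathbb{B}(c,\frac{\delta'}{1-\epsilon})$ then transfers this back to Euclidean balls. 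You instead build an $(\epsilon\delta)$-net of size $O((\Delta/(\epsilon\delta))^\rho)$, reduce each enclosed $\delta$-ball to a contained net-centered ball of radius $(1-\epsilon)\delta$, and finish with per-ball Chernoff plus a union bound over the net. The pigeonhole mass lower bound via the doubling cover and $\lambda$ (your $p_{\min}$, the paper's inequality~(\ref{for-the-sample-1})) is identical in both arguments, as is the need to assert that a shrunken ball inherits the enclosed property --- a point the paper also asserts without detailed justification, so you are at comparable rigor there. What the two approaches buy: the paper's uniform-convergence route is heavier machinery but gives a two-sided estimate for every ball; your net-plus-union-bound route is more elementary (no smoothed-distance VC bound needed) and, because the $\epsilon_0$-sample bound costs $\Theta(\epsilon_0^{-2}\mathtt{dim}_\epsilon\log(\mathtt{dim}_\epsilon/\epsilon_0))$ with $\epsilon_0^{-2}\sim\lambda^2(\Delta/\delta)^{2\rho}\epsilon^{-2}$, your second term scales like $\lambda(\Delta/\delta)^\rho$ times logarithmic and $\mathrm{poly}(1/\epsilon)$ factors rather than the paper's $\lambda^2(\Delta/\delta)^{2\rho}(1/\epsilon)^{\rho+2}$ --- i.e., you actually prove a quantitatively stronger sufficient condition, which of course still implies the theorem as stated.
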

\begin{remark}
	\textbf{(\rmnum{1})} A highlight of Theorem~\ref{the-sample} is that the lower bound of $|P|$ is independent of the dimensionality of the input space (which could be much higher than the intrinsic dimension). 
	
	\textbf{(\rmnum{2})} For the simplest case that $\Omega$ lies on a flat manifold and the data  uniformly distribute over $\Omega$, $\lambda$ will be equal to $1$ and thus the lower bound of $|P|$ in Theorem~\ref{the-sample} becomes $\max\Big\{\Theta\big(\frac{m}{1-\epsilon}(\frac{1+\epsilon}{1-\epsilon}\frac{\Delta}{\delta})^\rho\big), \tilde{\Theta}\big(\rho(\frac{1+\epsilon}{1-\epsilon}\frac{\Delta}{\delta})^{2\rho}(\frac{1}{\epsilon})^{\rho+2}\big)\Big\}$.
\end{remark}
Before proving Theorem~\ref{the-sample}, we need to relate  the doubling dimension $\rho$ to the VC dimension $\mathtt{dim}$ of the range space consisting of all balls with different radii~\citep{DBLP:journals/jcss/LiLS01}. Unfortunately, 
\cite{DBLP:conf/focs/HuangJLW18} recently showed that ``{\em although both dimensions are subjects of extensive research, to the best of our knowledge, there is no nontrivial relation known between the two}''. For instance, they constructed a doubling metric having unbounded VC dimension, and the other direction cannot be bounded neither. However, if allowing a small distortion to the distance, we can achieve an upper bound on the VC dimension for a given metric space with bounded doubling dimension. 
For stating the result, they defined a distance function called ``{\em $\epsilon$-smoothed distance function}'': $g(p, q)\in (1\pm\epsilon)||p-q||$ for any two data points $p$ and $q$, where $\epsilon\in(0,\frac{1}{8})$. Given a point $p$ and $\delta>0$, the ball defined by this distance function $g(\cdot, \cdot)$ is denoted by $\mathbb{B}_g(p,\delta)=\{q\in \text{the input space}\mid g(p, q)\leq \delta\}$.

\begin{theorem}[\cite{DBLP:conf/focs/HuangJLW18}]
	\label{the-jian}
	Suppose the point set $P$ has the doubling dimension $\rho>0$. There exists an $\epsilon$-smoothed distance function ``$g(\cdot, \cdot)$'' such that the VC dimension\footnote{\cite{DBLP:conf/focs/HuangJLW18} used ``shattering dimension'' to state their result. Actually, the shattering dimension is another measure for the complexity of range space, which is tightly related to the VC dimension~\citep{DBLP:conf/stoc/FeldmanL11}. For example, if the shattering dimension is $\rho_0$, the VC dimension should be bounded by $O(\rho_0\log \rho_0)$.} $\mathtt{dim}_\epsilon$ of the range space consisting of all balls with different radii is at most $\tilde{O}(\frac{\rho}{\epsilon^\rho})$, if replacing the Euclidean distance by $g(\cdot, \cdot)$.
\end{theorem}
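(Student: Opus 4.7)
The plan is to bound the \emph{shattering dimension} of the range space of balls under a carefully chosen smoothed distance $g$, and then invoke the standard relationship between shattering and VC dimensions (which costs only a logarithmic factor, absorbed in the $\tilde{O}(\cdot)$). The construction of $g$ proceeds via a hierarchical net of $P$ at geometric scales $\{2^i\}$ spanning the diameter: at scale $2^i$, take a maximal $(\epsilon\cdot 2^i)$-net $N_i$, which by the doubling property has at most $(O(1)/\epsilon)^\rho$ points inside any ball of radius $2^i$. The smoothed distance $g(p,q)$ is then obtained by snapping each endpoint to its nearest representative in the net at the scale closest to $\|p-q\|$; a symmetric tie-breaking rule guarantees that $g$ is well-defined and that $g(p,q)\in(1\pm\epsilon)\|p-q\|$ for every pair.

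Next, I would bound the number of distinct ranges $\mathbb{B}_g(c,r)\cap X$ realizable on an arbitrary $N$-point subset $X\subseteq P$. The essential observation is that, after snapping, whether $x\in X$ lies in $\mathbb{B}_g(c,r)$ is determined entirely by the configuration of net-points at scale $\epsilon r$ lying within distance $(1+\epsilon)r$ of the snapped center of $c$. The doubling property caps this local configuration space at $(O(1)/\epsilon)^\rho$ net-points, so only polynomially many (in $N$) ``canonical'' balls need to be considered up to equivalence on $X$. Careful counting, together with a union bound over the $O(\log(\Delta/\delta))$ scales, shows that on any $N$-point subset the number of distinct ranges is at most $N^{\tilde{O}(\rho/\epsilon^\rho)}$, giving the claimed shattering-dimension bound. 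A standard Sauer--Shelah argument then converts this into $\mathtt{dim}_\epsilon = \tilde{O}(\rho/\epsilon^\rho)$ as stated.

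The main obstacle is making one global distance function $g$ serve \emph{all} radii simultaneously while preserving the multiplicative $(1\pm\epsilon)$ distortion. Snapping at a single fixed scale fails in both regimes: at small $r$, the additive snapping error swamps $\epsilon r$; at large $r$, distinguishability at the ball boundary is lost. The hierarchical construction fixes this, but one must verify that the same $g$ satisfies the distortion bound uniformly in $r$, which forces a symmetric, scale-adaptive snapping rule. Producing such a rule, proving its global distortion guarantee, and showing that the resulting discretization still yields the $\tilde{O}(\rho/\epsilon^\rho)$ shattering bound (rather than something that depends multiplicatively on the number of scales) is the technical crux of the argument, and is where I would expect to spend most of the effort.
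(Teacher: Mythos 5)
You cannot be compared against ``the paper's own proof'' here, because the paper does not prove this statement: Theorem~\ref{the-jian} is imported as a black box from \cite{DBLP:conf/focs/HuangJLW18} (with the footnote translating their shattering-dimension bound into a VC-dimension bound), and is only invoked inside the proof of Theorem~\ref{the-sample}. Judged against the cited source, your outline is a faithful reconstruction of the actual strategy there: a hierarchical net at geometric scales, a smoothed distance defined by snapping endpoints to net points at scale proportional to $\epsilon\|p-q\|$ (symmetry is automatic since the scale depends only on the pair, and the $(1\pm O(\epsilon))$ pairwise distortion follows because each endpoint moves by at most $O(\epsilon)\|p-q\|$), a counting bound of roughly $N^{O(\rho/\epsilon^{O(\rho)})}$ realizable ranges on any $N$-point set, and the standard conversion from range counting to shattering dimension and then, via $2^{\mathtt{dim}}\le \mathtt{dim}^{s}$, to $\mathtt{dim}_\epsilon=O(s\log s)$, matching the footnote.

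As a proof, however, the proposal stops precisely at the step you flag as the crux, and the difficulty there is located slightly differently from where you put it. The uniform distortion of the scale-adaptive snapping is the easy half. The genuine obstacle is your claim that membership of $x$ in $\mathbb{B}_g(c,r)$ is ``determined entirely by the configuration of net-points at scale $\epsilon r$'': it is not, since different points $x$ are snapped at scales $\approx\epsilon\|c-x\|$, which vary across the ball, so a naive count picks up a factor for every scale below $\log_2 r$. The rescue is the distortion guarantee itself: any $x$ with $\|c-x\|\le\frac{1-\epsilon}{1+\epsilon}\,r$ lies in $\mathbb{B}_g(c,r)$ regardless of how it is snapped, and any $x$ with $\|c-x\|>\frac{r}{1-\epsilon}$ lies outside; hence only points in an annulus of width $O(\epsilon r)$ around radius $r$ have configuration-dependent membership, and those are all snapped at $O(1)$ scales near $\log_2 r$. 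Together with discretizing the admissible radii (logarithmically many rounded radii per scale suffice under $(1\pm\epsilon)$ distortion, contributing only the hidden polylog), this localization is what keeps the exponent at $\tilde{O}(\rho/\epsilon^{\rho})$ instead of acquiring a multiplicative number-of-scales factor --- the failure mode you worry about in your last paragraph. Until that localization argument and the exact snapping rule are written out (and the net-density exponent is tracked, since packing in doubling dimension $\rho$ gives $(O(1)/\epsilon)^{O(\rho)}$ rather than $(O(1)/\epsilon)^{\rho}$ net points per ball), the proposal is a correct plan consistent with \cite{DBLP:conf/focs/HuangJLW18}, not yet a complete proof.
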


\begin{proof}\textbf{(of Theorem~\ref{the-sample})} 
	%
	Let $r$ be any positive number. First, since the doubling dimension of $\Omega$ is $\rho$, if recursively applying Definition~\ref{def-dd} $\log\frac{\Delta}{r}$ times, we know that $\Omega$ can be covered by at most $\Theta\Big(\big(\frac{\Delta}{r}\big)^\rho\Big)$ balls with radius $r$. Thus, if $\mathbb{B}(c, r)$ is enclosed by $\Omega$, we have 
	\begin{eqnarray}
		\frac{\int_{\mathbb{B}(c, r)}\! f(x) \, \mathrm{d}x}{\int_{\Omega}\! f(x) \, \mathrm{d}x}\geq \Theta\Big(\frac{1}{\lambda}\cdot(\frac{r}{\Delta})^\rho\Big). \label{for-the-sample-1}
	\end{eqnarray}
	Now we consider the size $|\mathbb{B}(c, \delta)\cap P|$. From Theorem~\ref{the-jian}, we know that the VC dimension $\mathtt{dim}_\epsilon$ with respect to the $\epsilon$-smoothed distance is $\tilde{O}(\frac{\rho}{\epsilon^\rho})$. Thus, for any $\epsilon_0\in(0,1)$, if 
	\begin{eqnarray}
		|P|\geq \Theta\big(\frac{1}{\epsilon^2_0}\mathtt{dim}_\epsilon\log\frac{\mathtt{dim}_\epsilon}{\epsilon_0}\big),  \label{for-the-sample-6}
	\end{eqnarray}
	the set $P$ will be an $\epsilon_0$-sample of $\Omega$; that is, for any point $c$ and $\delta'\geq 0$, 
	\begin{eqnarray}
		\frac{|\mathbb{B}_g(c, \delta')\cap P|}{|P|}\in \frac{\int_{\mathbb{B}_g(c, \delta')}\! f(x) \, \mathrm{d}x}{\int_{\Omega}\! f(x) \, \mathrm{d}x}\pm\epsilon_0 \label{for-the-sample-5}
	\end{eqnarray}
	with constant probability\footnote{The exact probability comes from the success probability that $P$ is an $\epsilon_0$-sample of $\Omega$.  Let $\eta\in(0,1)$, and the size $|P|$ in (\ref{for-the-sample-6}) should be at least $\Theta\big(\frac{1}{\epsilon^2_0}(\mathtt{dim}_\epsilon\log\frac{\mathtt{dim}_\epsilon}{\epsilon_0}+\log\frac{1}{\eta})\big)$ to guarantee a success probability $1-\eta$. For  convenience, we assume $\eta$ is a fixed small constant and simply say ``$1-\eta$'' is a ``constant probability''.}~\citep{DBLP:journals/jcss/LiLS01}. Because $g(\cdot, \cdot)$ is an $\epsilon$-smoothed distance function of the Euclidean distance, we have
	\begin{eqnarray}
		\mathbb{B}(c, \frac{\delta'}{1+\epsilon})\subseteq\mathbb{B}_g(c, \delta')\subseteq\mathbb{B}(c, \frac{\delta'}{1-\epsilon}).\label{for-the-sample-7-1}
	\end{eqnarray} 
	So if we set $\epsilon_0=\epsilon\cdot\Theta\Big(\frac{1}{\lambda}\cdot(\frac{1-\epsilon}{1+\epsilon}\frac{\delta}{\Delta})^\rho\Big)$ and $\delta'=(1-\epsilon)\delta$, (\ref{for-the-sample-1}), (\ref{for-the-sample-5}), and (\ref{for-the-sample-7-1}) jointly imply $\frac{|\mathbb{B}(c, \delta)\cap P|}{|P|}=$ 
	\begin{eqnarray}
		\frac{|\mathbb{B}(c, \frac{\delta'}{1-\epsilon})\cap P|}{|P|}&\geq& \frac{|\mathbb{B}_g(c, \delta')\cap P|}{|P|}\nonumber\\
		&\geq&\frac{\int_{\mathbb{B}_g(c, \delta')}\! f(x) \, \mathrm{d}x}{\int_{\Omega}\! f(x) \, \mathrm{d}x}-\epsilon_0\nonumber\\
		&\geq& \frac{\int_{\mathbb{B}(c,  \frac{\delta'}{1+\epsilon})}\! f(x) \, \mathrm{d}x}{\int_{\Omega}\! f(x) \, \mathrm{d}x}-\epsilon_0\nonumber\\
		\geq&&\hspace{-0.3in} (1- \epsilon)\cdot \Theta\Big(\frac{1}{\lambda}\cdot(\frac{1-\epsilon}{1+\epsilon}\frac{\delta}{\Delta})^\rho\Big). \label{for-the-sample-4}
	\end{eqnarray}
	The last inequality comes from (\ref{for-the-sample-1}) (since we assume the ball $\mathbb{B}\big(c,  \delta\big)$ is enclosed by $\Omega$, the shrunk ball $\mathbb{B}(c,  \frac{\delta'}{1+\epsilon})=\mathbb{B}(c,  \frac{1-\epsilon}{1+\epsilon}\delta)$ should be enclosed as well). 
	Moreover, if 
	\begin{eqnarray}
		|P|\geq \Theta\big(\frac{m}{1-\epsilon}\cdot\lambda\cdot(\frac{1+\epsilon}{1-\epsilon}\frac{\Delta}{\delta})^\rho\big),\label{for-the-sample-3}
	\end{eqnarray}
	we have $|\mathbb{B}\big(c,  \delta\big)\cap P|> m$ from (\ref{for-the-sample-4}). Combining (\ref{for-the-sample-6}) and (\ref{for-the-sample-3}), we obtain the lower bound of $|P|$.
\end{proof}
\vspace{-0.2in}
\subsection{The  DBSCAN Approach}
\label{sec-our}
\vspace{-0.1in}

For the sake of completeness, we briefly introduce the method of DBSCAN~\citep{ester1996density}. 
Given two parameters $r>0$ and $\mathtt{MinPts}\in\mathbb{Z}^+$, the DBSCAN divides the set $P$ into three classes: 
(1) $p$ is a \textbf{core point}, if $|\mathbb{B}(p, r)\cap P|> \mathtt{MinPts}$;
(2) $p$ is a \textbf{border point}, if $p$ is not a core point but $p\in \mathbb{B}(q, r)$ of some core point $q$;
(3) all the other points are \textbf{outliers}. 
Actually, we can imagine that the set $P$ forms a graph where any pair of core or border points are connected if their pairwise distance is no larger than $r$; then the set of core points and border points form several clusters where each cluster is a connected component (a border point may belong to multiple clusters, but we can arbitrarily assign it to only one cluster). The goal of  DBSCAN is to identify  these clusters and the remaining outliers. Several efficient implementations for DBSCAN can be found in~\citep{gan2015dbscan,schubert2017dbscan}.

Following Section~\ref{sec-the}, we assume that $P$ is a set of {\em i.i.d.} samples drawn from the connected and compact domain $\Omega$  who has the doubling dimension $\rho>0$. 
We let $Q$ be the set of $z$ poisoning data items injected by the attacker to $P$, and suppose each $q\in Q$ has distance larger than $\delta_1>0$ to $\Omega$.  
In an evasion attack, we often use the adversarial perturbation distance to evaluate the attacker's capability; but in a poisoning attack, the attacker can easily achieve a large perturbation distance ({\em e.g.}, in the SVM problem, if the attacker flips the label of some point $p$, it will become an outlier having the perturbation distance larger than $h_{opt}$ to its ground truth domain, where $h_{opt}$ is the optimal margin width). Also, we assume the boundary $\partial \Omega$ is smooth and has curvature radius at least $\delta_2>0$ everywhere. For simplicity, let $\delta=\min\{\delta_1, \delta_2\}$. 
The following theorem states the effectiveness of the DBSCAN with respect to the poisoned dataset $P\cup Q$. We assume the poisoned fraction $\frac{|Q|}{|P|}=\frac{z}{|P|}<1$.

\begin{theorem}
	\label{the-dbscan}
	We let $m$ be any absolute constant number larger than $1$, and assume that the size of $P$ satisfies the lower bound of Theorem~\ref{the-sample}. 
	If we set $r=\delta$ and $\mathtt{MinPts}=m$, and run DBSCAN on the poisoned dataset $P\cup Q$, the obtained largest cluster should be exactly $P$. In other word, the set $Q$ should be formed by the outliers and the clusters except the largest one from the DBSCAN.  
\end{theorem}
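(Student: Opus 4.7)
The plan is to reduce the theorem to four assertions: (i) every $p \in P$ is a DBSCAN core point of $P \cup Q$, (ii) the points of $P$ form a single cluster, (iii) no point of $Q$ is connected to any point of $P$ via the DBSCAN chain, and (iv) the $P$-cluster is the largest one. Assertion (iii) is immediate: since each $q \in Q$ has distance $>\delta_1 \geq \delta = r$ from $\Omega \supseteq P$, no point of $Q$ lies within $r$ of any point of $P$. Assertion (iv) follows from the hypothesis $|Q| = z < |P|$, so any cluster composed only of points of $Q$ is strictly smaller than $|P|$ and any cluster containing at least one point of $P$ must (by Steps (i) and (ii)) be the full $P$-cluster.

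For (i), since $Q$ contributes no neighbors inside $\mathbb{B}(p, r)$ by (iii), it suffices to show $|\mathbb{B}(p, \delta) \cap P| > m$ for each $p \in P$. If $\mathbb{B}(p, \delta)$ happens to be enclosed by $\Omega$, Theorem~\ref{the-sample} applies directly. Otherwise $p$ lies within $\delta$ of $\partial \Omega$, and I would invoke the curvature-radius hypothesis (radius $\geq \delta_2 \geq \delta$) to inscribe a ball $\mathbb{B}(c, \delta')$ with $\delta' = \Theta(\delta)$ that is enclosed by $\Omega$ and that sits inside $\mathbb{B}(p, \delta)$; applying Theorem~\ref{the-sample} to this smaller enclosed ball, at the cost of only a constant factor in the sample-size bound, then yields $|\mathbb{B}(p, \delta) \cap P| > m$.

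For (ii), I would invoke Theorem~\ref{the-sample} a second time with a reduced radius (say $\delta/3$) and $\mathtt{MinPts} = 1$ to deduce that every point of $\Omega$ has some sample of $P$ within distance $\delta/3$ (again using the curvature trick near $\partial \Omega$). Given $p_1, p_2 \in P$, take a continuous path $\gamma \subset \Omega$ from $p_1$ to $p_2$ (which exists since $\Omega$ is connected) and discretize it as $x_0 = p_1, x_1, \ldots, x_k = p_2$ with $\|x_i - x_{i+1}\| \leq \delta/3$. Pick $s_i \in P$ within $\delta/3$ of each $x_i$ (with $s_0 = p_1$, $s_k = p_2$); then $\|s_i - s_{i+1}\| \leq \delta = r$ by the triangle inequality, and by (i) each $s_i$ is a core point, so the chain $s_0, s_1, \ldots, s_k$ witnesses that $p_1$ and $p_2$ belong to the same DBSCAN cluster.

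The main technical obstacle is the boundary treatment used in (i) and (ii): Theorem~\ref{the-sample} only guarantees density inside balls \emph{enclosed} by $\Omega$, so near $\partial \Omega$ one must use the lower bound on curvature radius to exhibit a nearly-full-sized enclosed ball inside $\mathbb{B}(p, \delta)$, while keeping the constants small enough that the sample-size hypothesis assumed in the statement still suffices after the radius is rescaled. Once that geometric construction is in place, the remaining parts of the argument are essentially bookkeeping.
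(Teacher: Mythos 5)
Your proposal follows essentially the same route as the paper: use Theorem~\ref{the-sample} (together with the curvature bound $\delta\le\delta_2$ to handle points near $\partial\Omega$) to make every point of $P$ a core/border point, observe that every $q\in Q$ lies at distance greater than $r$ from $P$ so no mixed cluster can form, and conclude from the connectedness of $\Omega$ and $z<|P|$ that $P$ is the largest cluster. The paper's own proof is in fact terser than yours --- it asserts the single-cluster property directly from connectedness and does not spell out the boundary construction or the path-discretization chain you give --- so your additional care (at the cost of applying Theorem~\ref{the-sample} with a constant-factor-rescaled radius, which slightly inflates the required sample size) only makes the argument more explicit rather than different.
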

\begin{proof}
	Since $\delta\leq \delta_2$, for any $p\in P$, either the ball $\mathbb{B}(p, \delta)$ is enclosed by $\Omega$, or $p$ is covered by some ball $\mathbb{B}(q, \delta)$ enclosed by $\Omega$. 
	We set $r=\delta$ and $\mathtt{MinPts}=m$, and hence from Theorem~\ref{the-sample} we know that all the points of $P$ will be core points or border points. Moreover, any point $q$ from $Q$ has distance larger than $r$ to the points of $P$, that is, any two points $q\in Q$ and $p\in P$ should not belong to the same cluster of the DBSCAN. Also, because the domain $\Omega$ is connected and compact, the set $P$ must form the largest cluster. 
\end{proof}
\begin{remark}
	\label{rem-dbscan}
	\textbf{(\rmnum{1})} We often adopt the poisoned fraction $\frac{z}{|P|}$ as the measure to indicate the attacker's capability.
	If we fix the value of $z$, the bound of $|P|$ from Theorem~\ref{the-sample} reveals that 
	the larger the doubling dimension $\rho$, the lower the poisoned fraction $\frac{z}{|P|}$ (and the easier corrupting the DBSCAN defense). In addition, when $\delta$ is large, {\em i.e.,} each poisoning point has large perturbation distance and $\partial\Omega$ is sufficiently smooth, it will be relatively easy for  DBSCAN to defend. 
	
	But we should point out that \textbf{this theoretical bound probably is overly conservative}, since it requires a ``perfect'' sanitization result that removes all the poisoning samples (this is not always a necessary condition for achieving a good defending performance in practice). In our experiments, we show that the DBSCAN method can achieve promising performance, even when the poisoned fraction is higher than the threshold.
	
	\textbf{(\rmnum{2})} In practice, we  cannot obtain the exact values of $\delta$ and $m$. We follow the strategy that was commonly used in  DBSCAN implementations before~\citep{gan2015dbscan,schubert2017dbscan}; we set $\mathtt{MinPts}$ to be a small constant and tune the value of  $r$ until the largest cluster has $|P\cup Q|-z$ points. 
\end{remark}
    \vspace{-0.15in}


\textbf{Putting it all together.} Let $(P^+, P^-)$ be an instance of SVM with $z$ outliers, where $z$ is the number of poisoning points. We assume that the original input point sets $P^+$ and $P^-$ (before the poisoning attack) are {\em i.i.d.} samples drawn respectively from the connected and compact domains $\Omega^+$ and $\Omega^-$ with doubling dimension $\rho$. 
Then, we perform the DBSCAN procedure on $P^+$ and $P^-$ respectively (as Remark~\ref{rem-dbscan} (\rmnum{2})). Suppose the obtained  largest clusters are $\tilde{P}^+$ and $\tilde{P}^-$. Finally, we run an existing SVM algorithm on the cleaned instance $(\tilde{P}^+, \tilde{P}^-)$.


\vspace{-0.15in}
\section{Empirical Experiments}
\label{sec-exp}
\vspace{-0.1in}
\begin{figure*}[]
	\centering
	\subfloat[]{\includegraphics[height=1.2in]{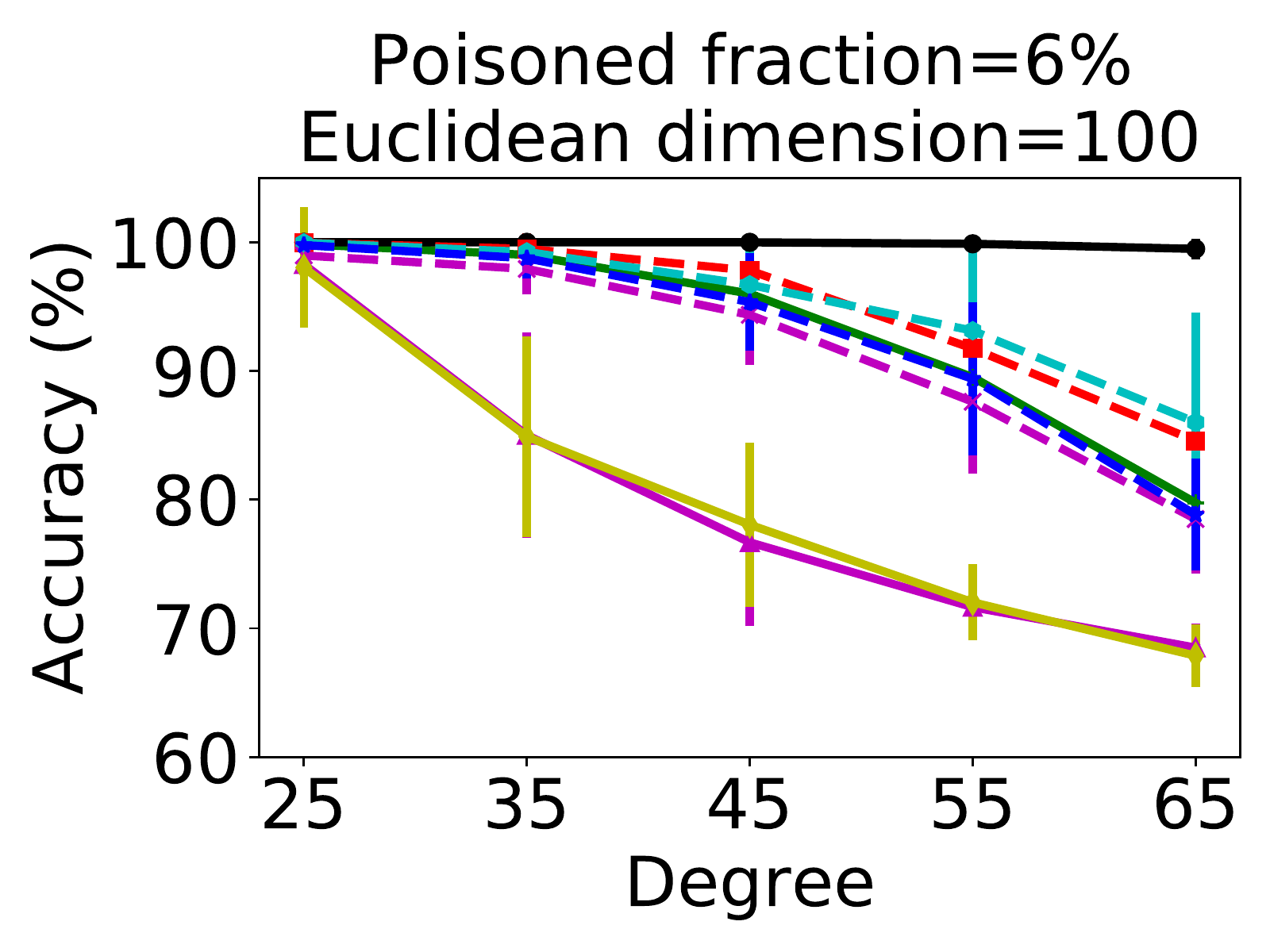}\label{fig-s10}}
	\hspace{0.5in}
	\subfloat[]{\includegraphics[height=1.2in]{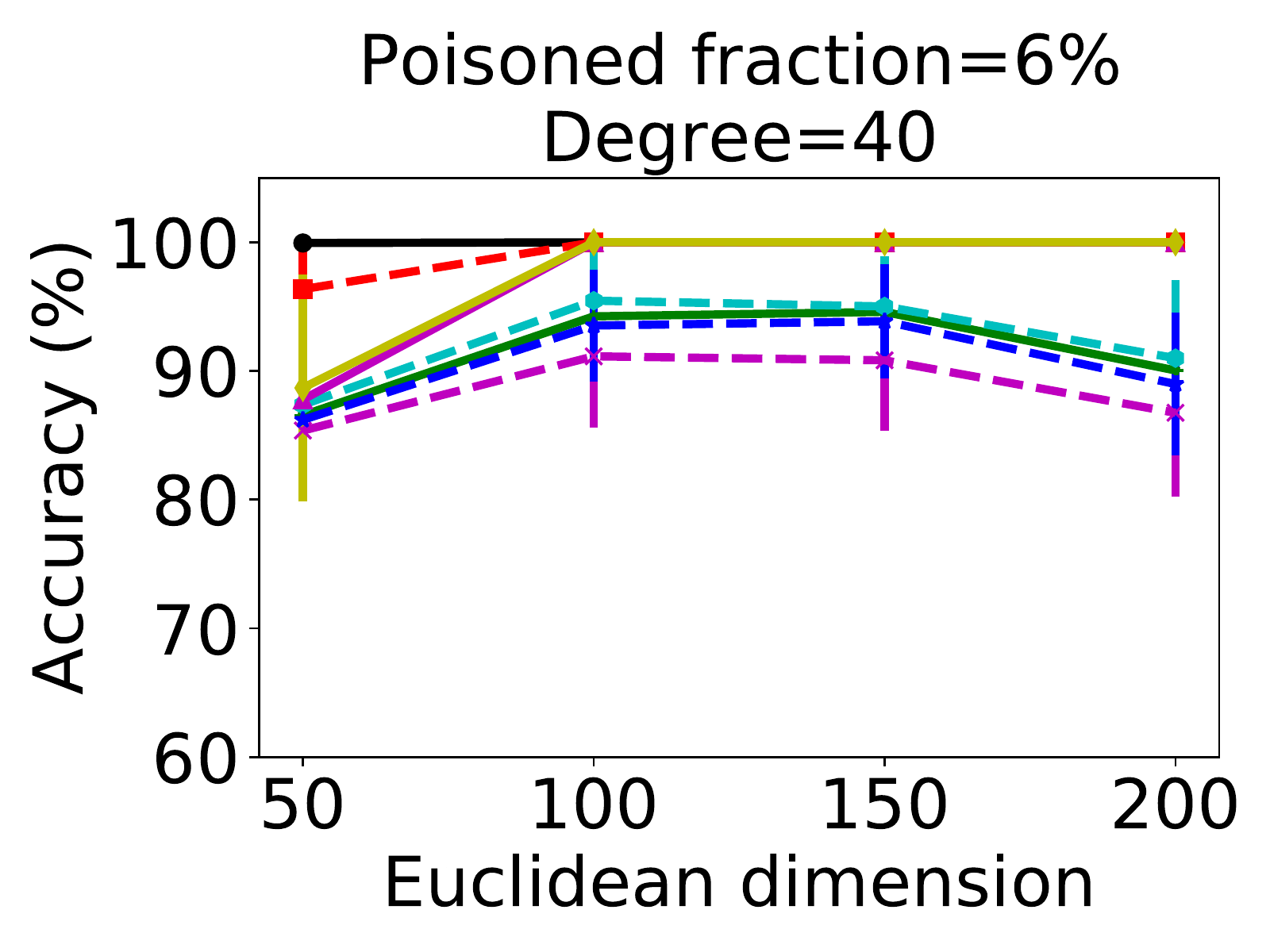}\label{fig-s11}}
	\hspace{0.5in}	
	\subfloat[]{\includegraphics[height=1.2in]{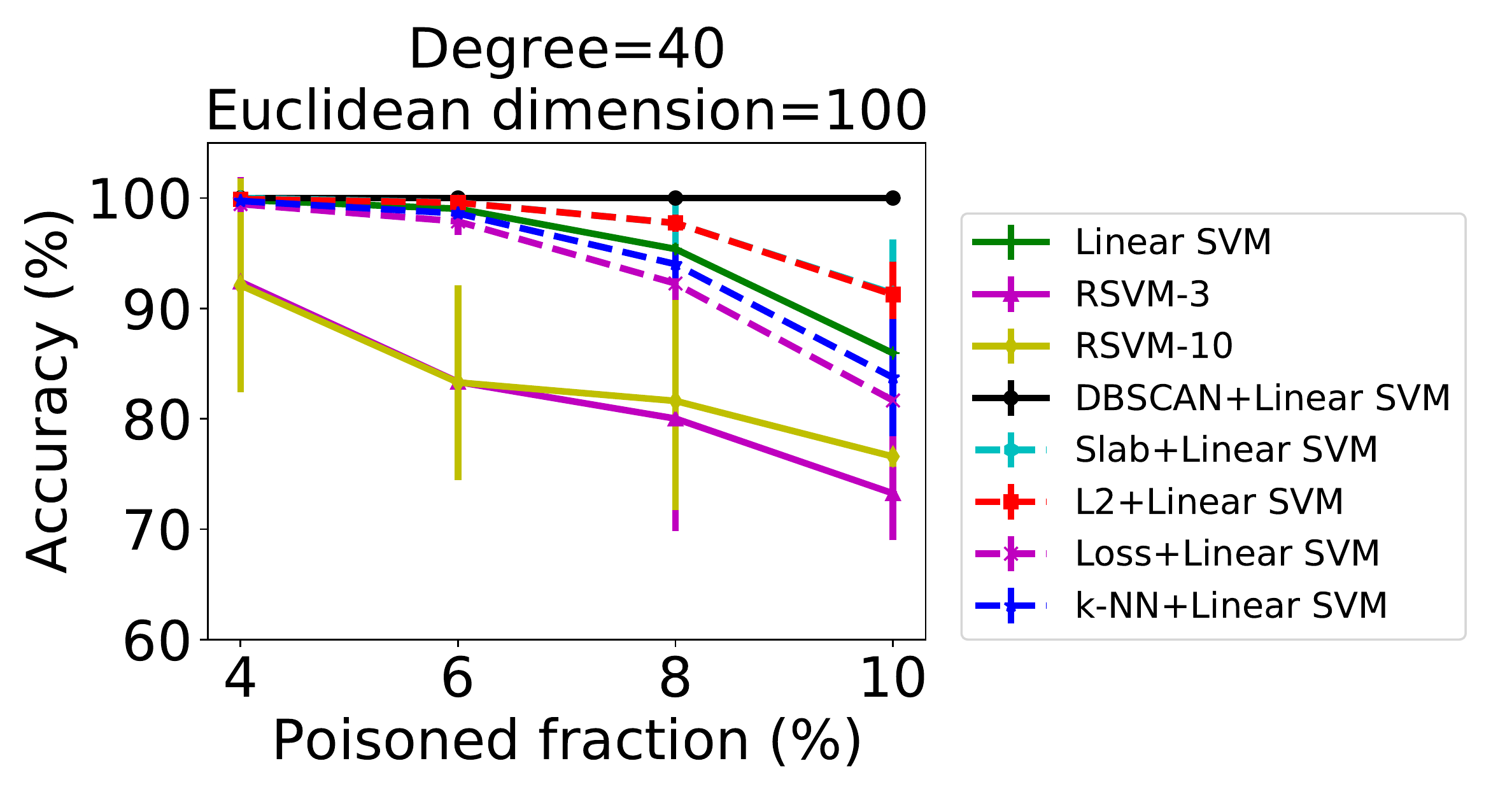}\label{fig-s12}}
	
	\subfloat[]{\includegraphics[height=1.2in]{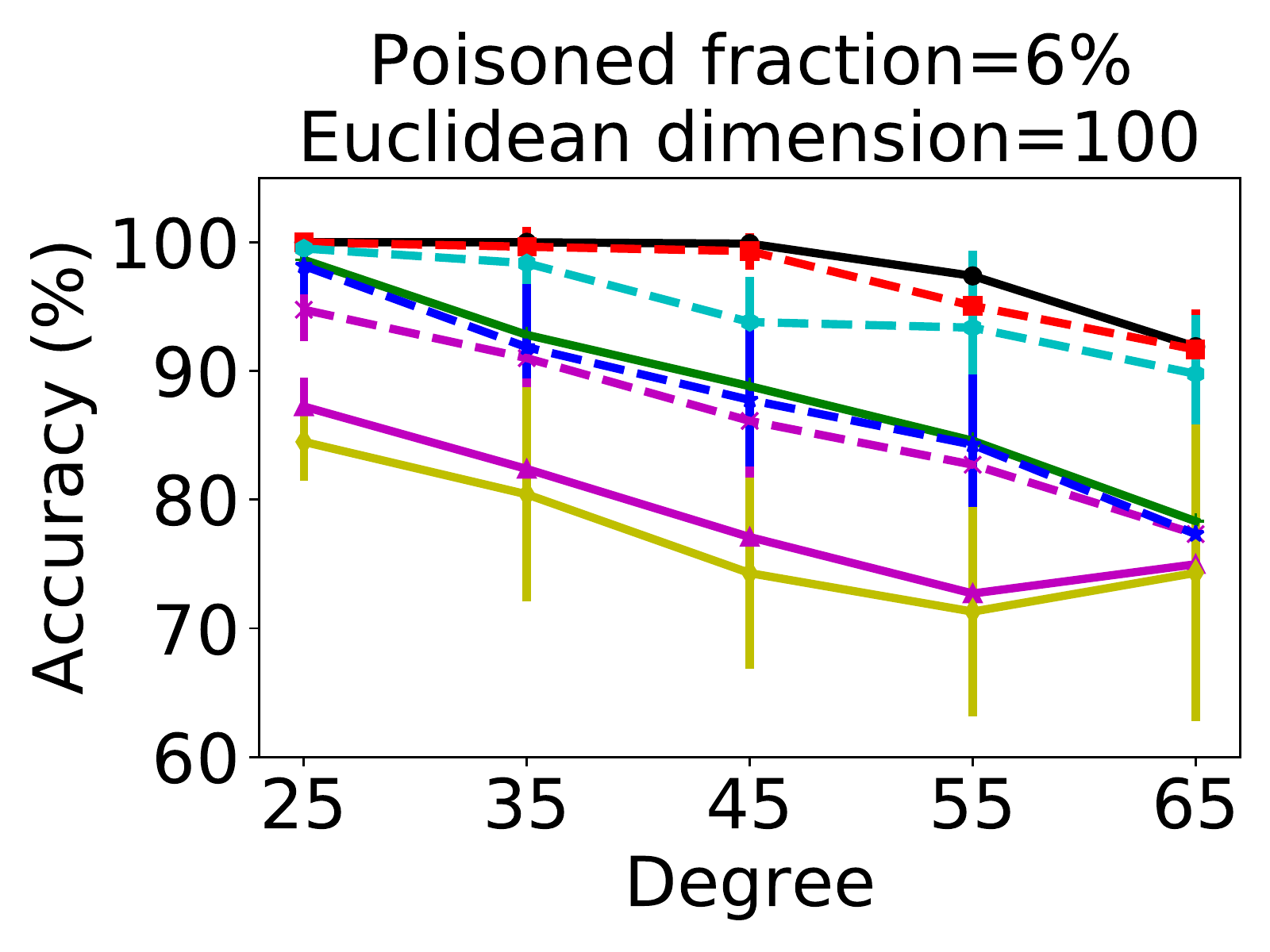}\label{fig-s20}}
	\hspace{0.5in}	
	\subfloat[]{\includegraphics[height=1.2in]{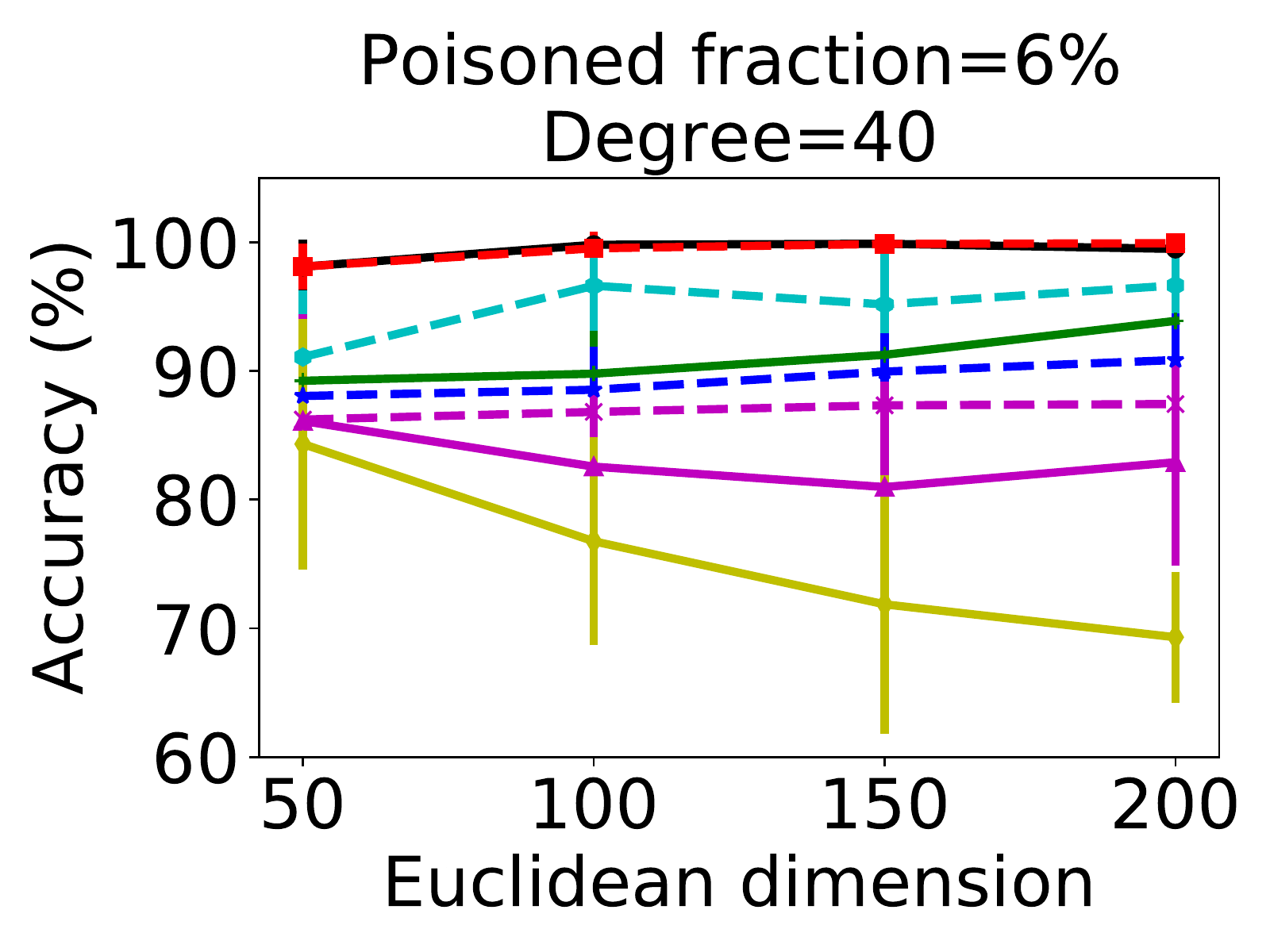}\label{fig-s21}}
	\hspace{0.5in}	
	\subfloat[]{\includegraphics[height=1.2in]{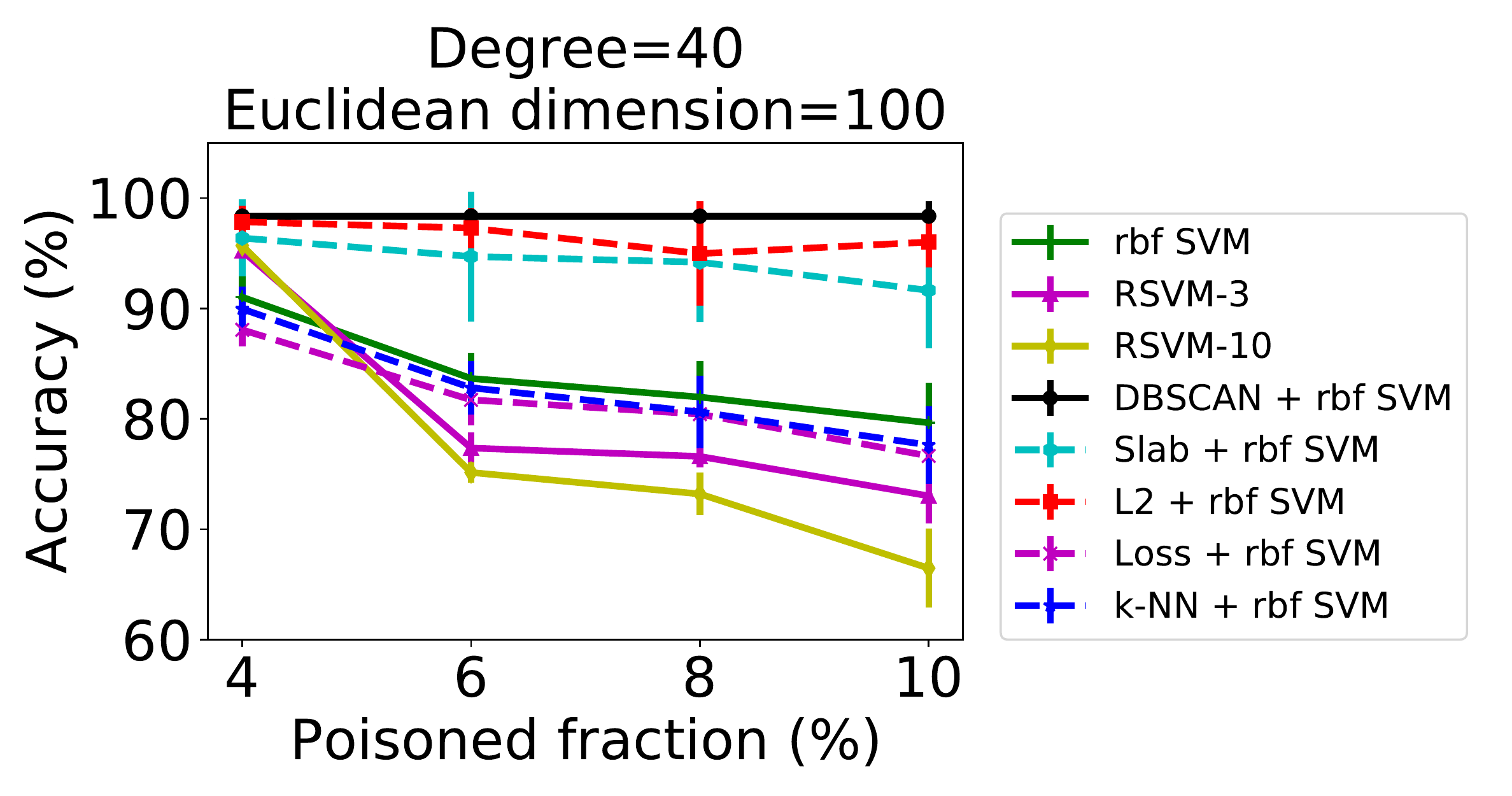}\label{fig-s22}}	
	\vspace{-0.1in}
	
	\subfloat[]{\includegraphics[height=1.2in]{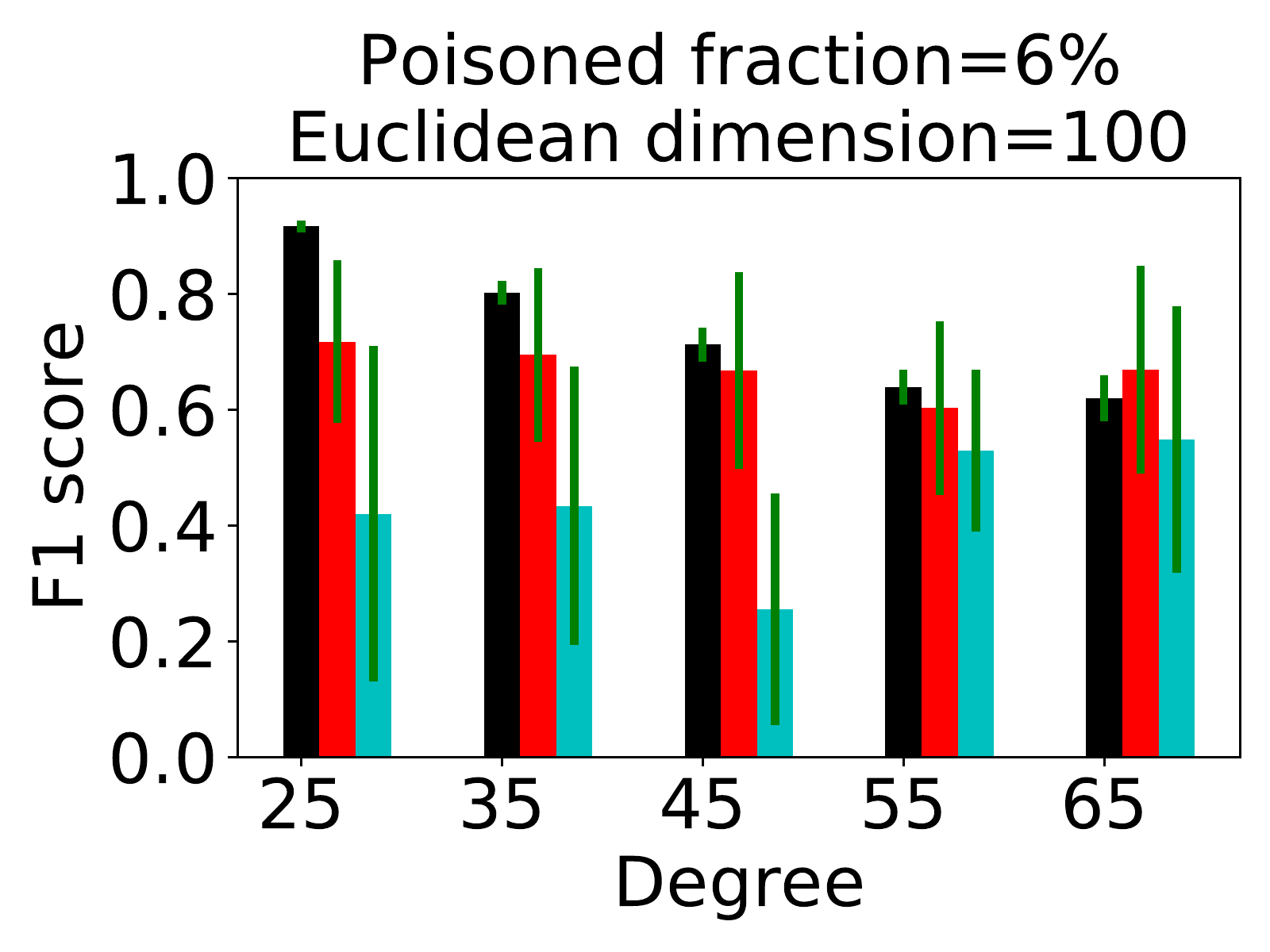}\label{fig-30}}
	\hspace{0.4in}	
	\subfloat[]{\includegraphics[height=1.2in]{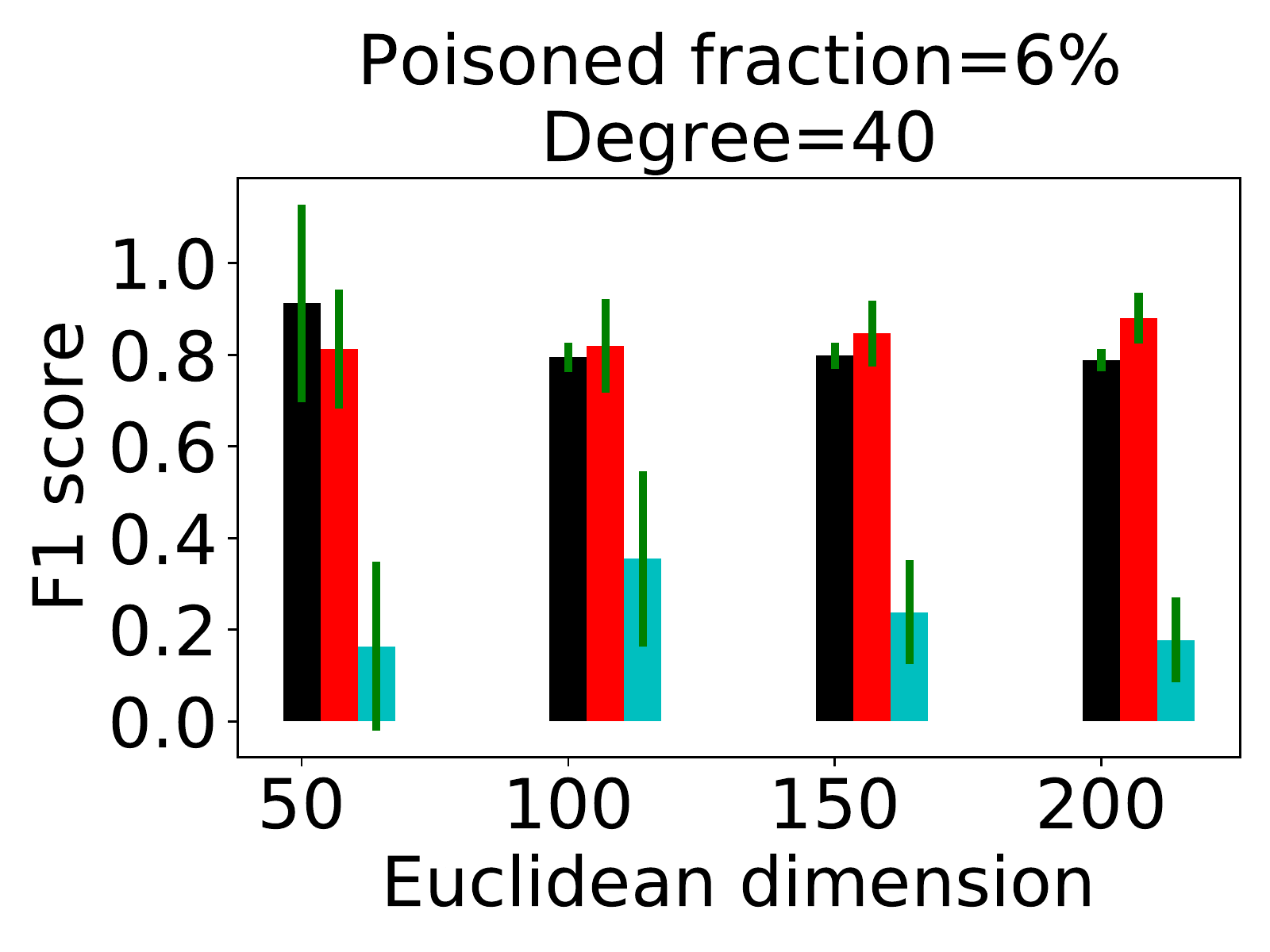}\label{fig-31}}
	\hspace{0.4in}	
	\subfloat[]{\includegraphics[height=1.2in]{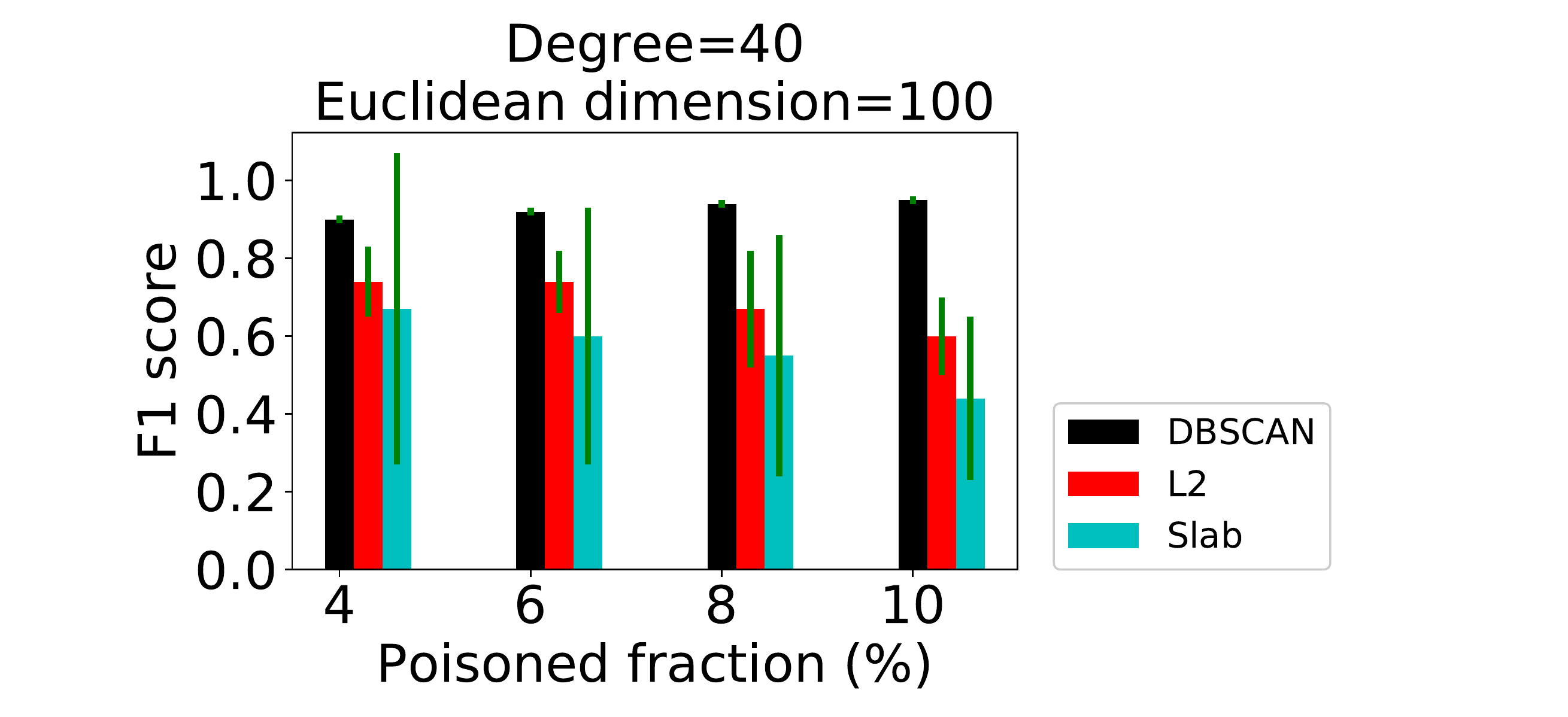}\label{fig-32}}	
	\vspace{-0.1in}
	\caption{The classification accuracy on the \textsc{Synthetic} datasets of Linear SVM (the first line) and SVM with RBF kernel (the second line) under \textsc{Min-Max} attack. The third line are the average $F_1$ scores. }
	\label{fig-su1}
	\vspace{-0.3in}
\end{figure*}




\begin{figure*}[]
	\centering
	\vspace{-0.4in}
	\subfloat[\textsc{Letter}]{\includegraphics[height=1.2in]{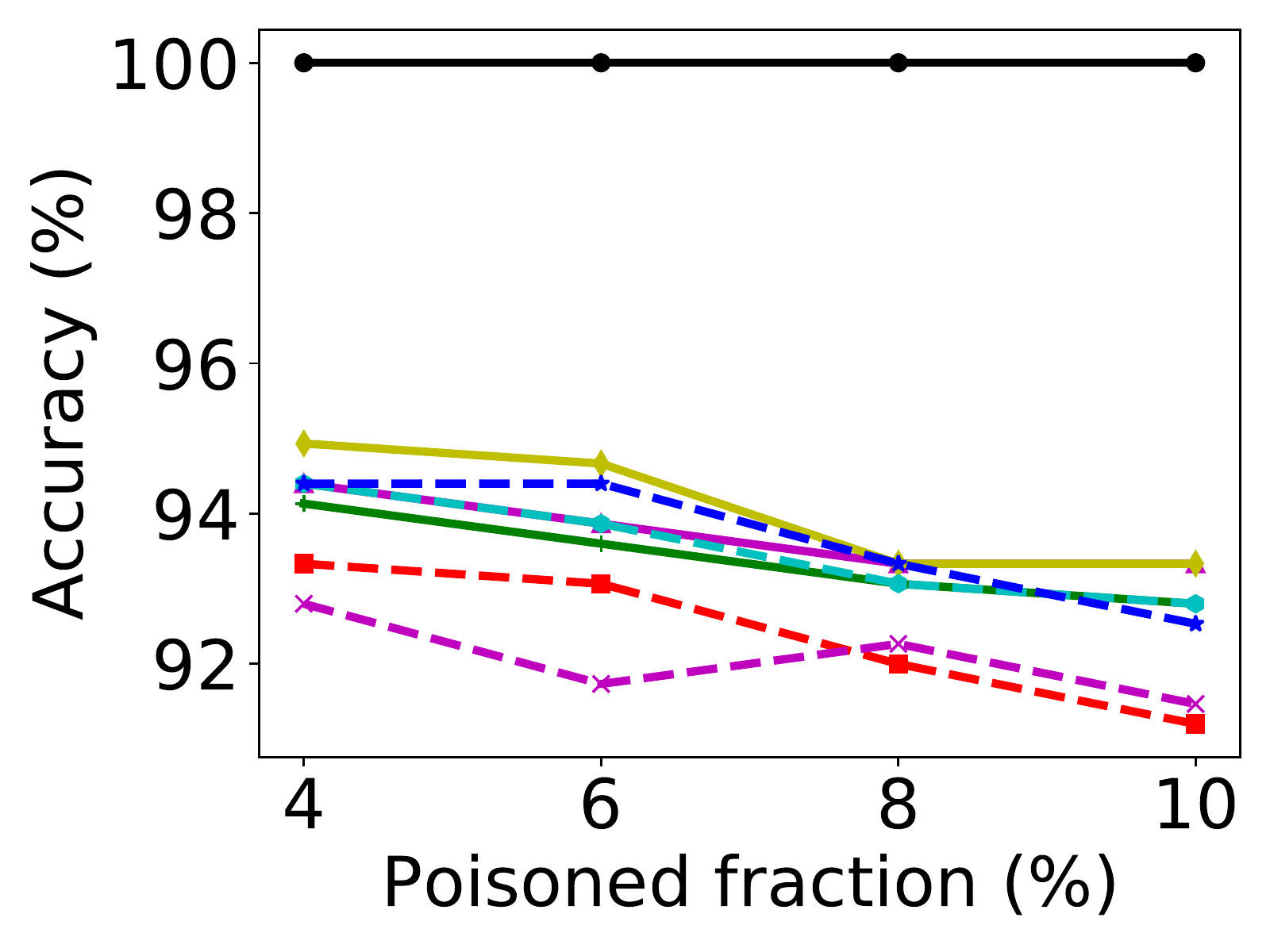}\label{fig-ex12}}
		\hspace{0.4in}	
	\subfloat[\textsc{Mushrooms}]{\includegraphics[height=1.2in]{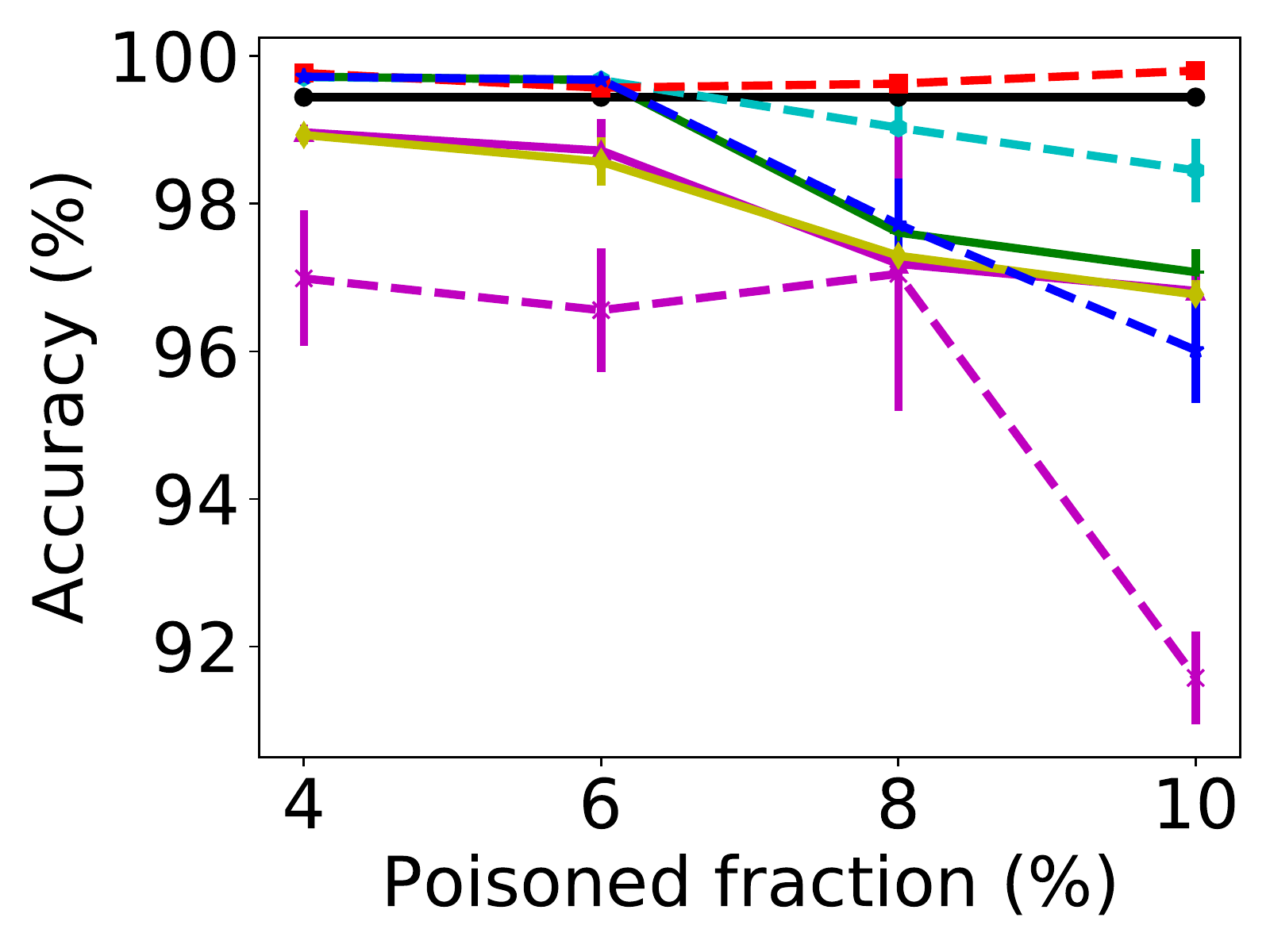}\label{fig-ex13}}
		\hspace{0.4in}	
	\subfloat[\textsc{Satimage}]{\includegraphics[height=1.2in]{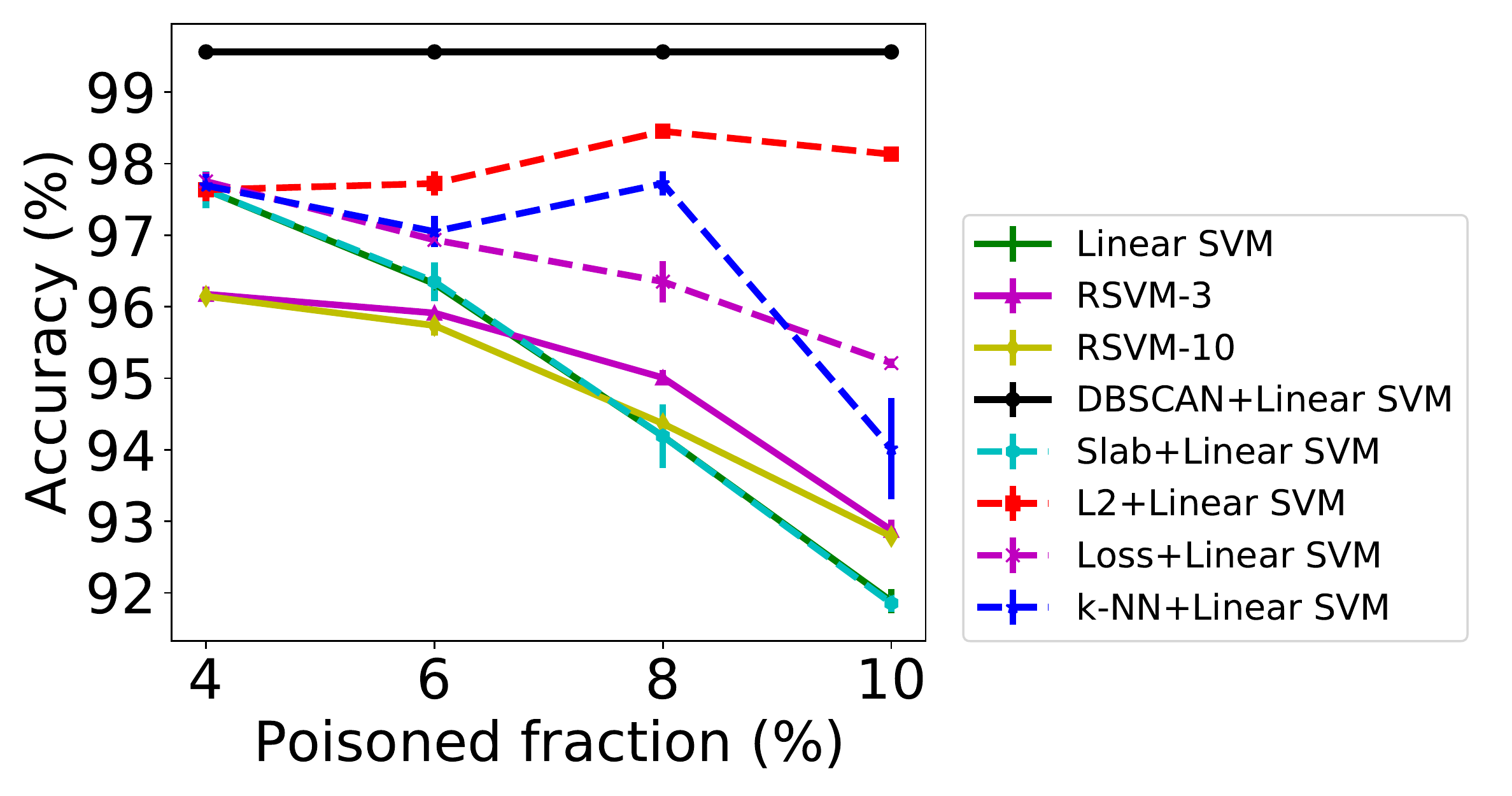}\label{fig-ex14}}
	\vspace{-0.1in}
	
	\subfloat[\textsc{Letter}]{\includegraphics[height=1.2in]{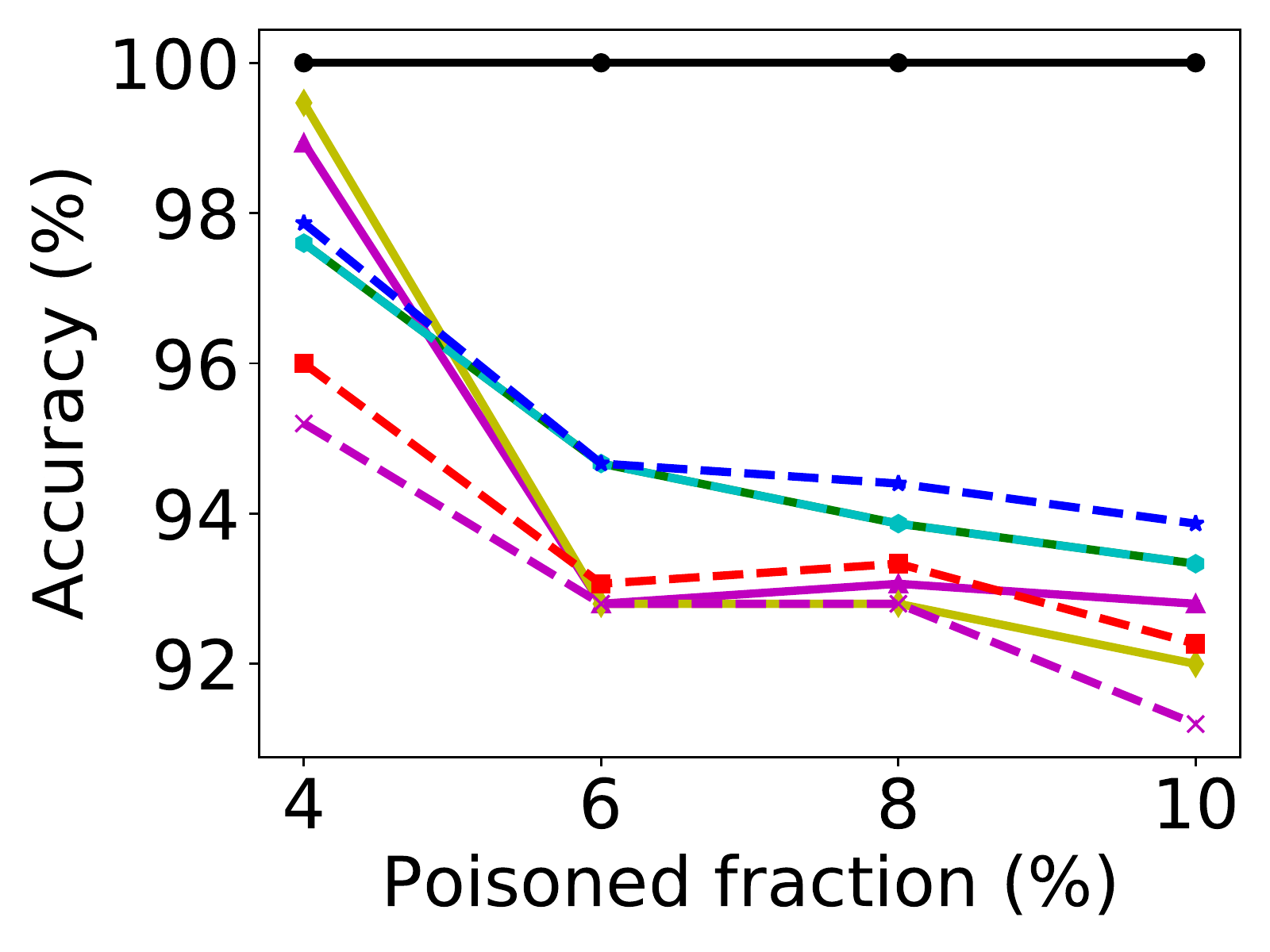}\label{fig-ex22}}
		\hspace{0.4in}	
	\subfloat[\textsc{Mushrooms}]{\includegraphics[height=1.2in]{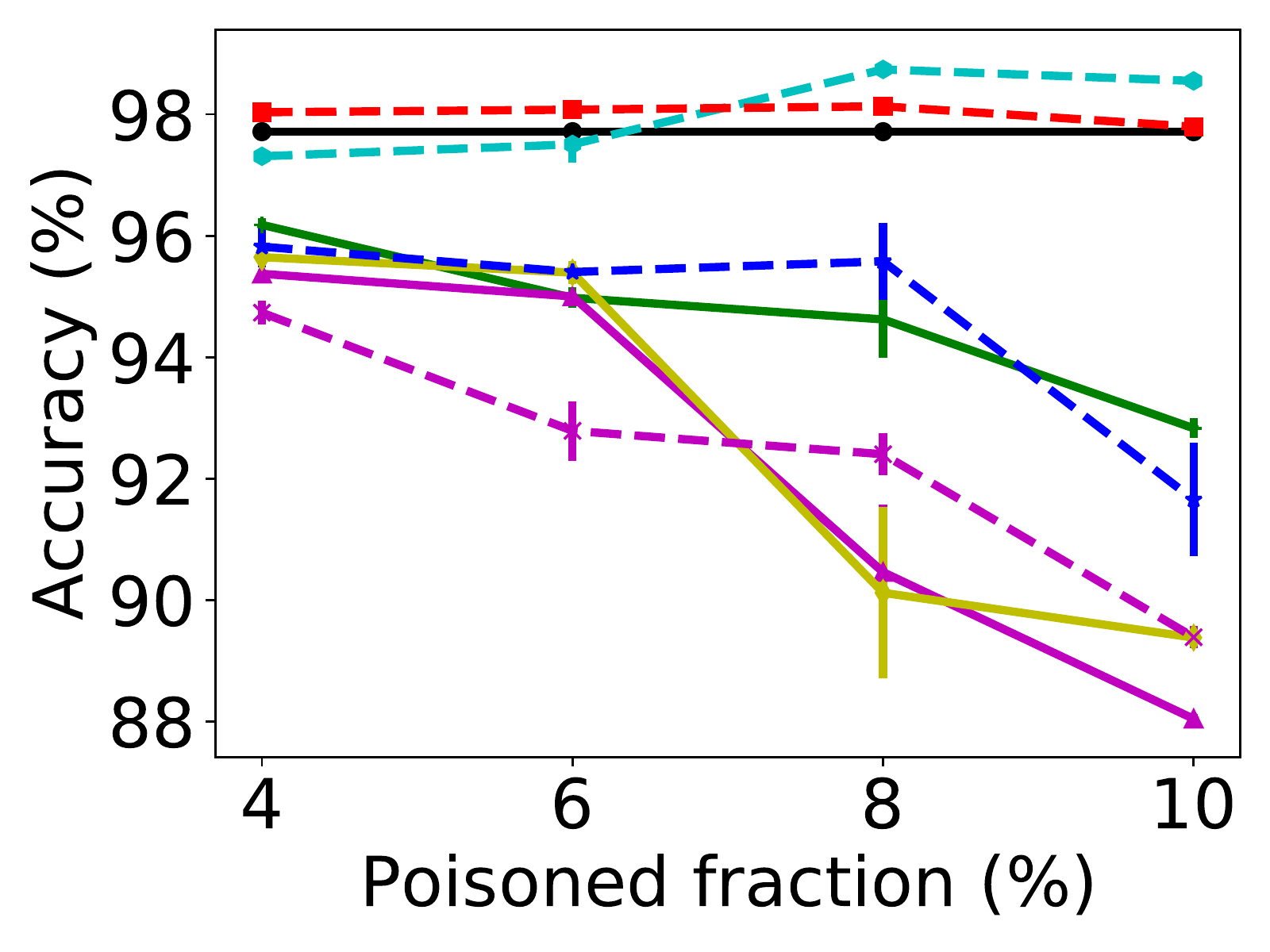}\label{fig-ex23}}
		\hspace{0.4in}	
	\subfloat[\textsc{Satimage}]{\includegraphics[height=1.2in]{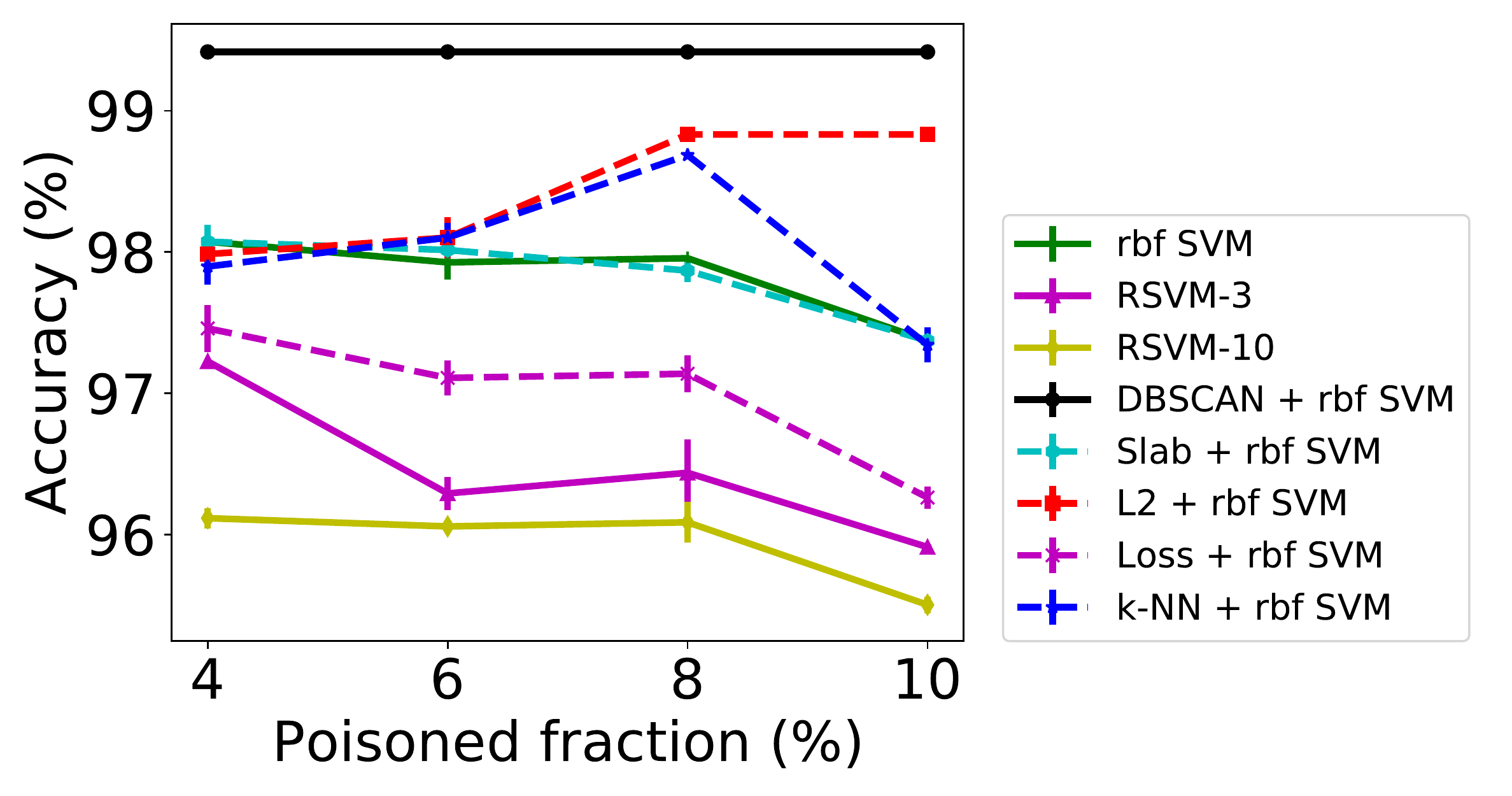}\label{fig-ex24}}
	\caption{The  classification accuracy on the real datasets of linear SVM (the first line) and SVM with RBF kernel (the second line) under \textsc{Min-Max} attack.}
	\label{fig-ex1}
	\vspace{-0.3in}
\end{figure*}

%
%

All the experiments were repeated $20$ times on a Windows $10$ workstation equipped with an Intel core $i5$-$8400$ processor and $8$GB RAM. 
To generate the poisoning attacks, we use the \textbf{\textsc{Min-Max}} attack from \citep{DBLP:journals/corr/abs-1811-00741} and the adversarial label-flipping attack \textbf{\textsc{ALFA}} from ALFASVMLib~\citep{DBLP:journals/ijon/XiaoBNXER15}. 
We evaluate the defending performances of the basic SVM algorithms and several different defenses by using their publicly available implementations.
\vspace{-0.1in}
\begin{enumerate}
	\item  We consider both the cases that not using and using kernel. For SVM without kernel, we directly use  \textbf{\textsc{linear SVM}} as the basic SVM algorithm; for SVM with kernel, we consider RBF kernel (\textbf{\textsc{rbf SVM}}).  Both the implementations are from~\citep{journals/tist/ChangL11}.  
	
	\item The recently proposed robust SVM algorithm \textbf{\textsc{RSVM-$S$}} based on the rescaled hinge loss function~\citep{DBLP:journals/pr/XuCHP17}.  The parameter ``$S$'' indicates the iteration number of the half-quadratic optimization ({\em e.g.,} we set $S=3$ and $10$ following their paper's setting). The algorithm also works fine when using a kernel. 
	

	\item The \textbf{\textsc{DBSCAN}} method~\citep{schubert2017dbscan} implemented as Remark~\ref{rem-dbscan} (\rmnum{2}). We set $\mathtt{MinPts}=5$ (our empirical study finds that  the difference is minor within the range $[3,10]$).  
	\item The data sanitization defenses from \citep{DBLP:journals/corr/abs-1811-00741} based on the spatial distribution of input data, which include \textbf{\textsc{Slab}}, \textbf{\textsc{L2}}, \textbf{\textsc{Loss}}, and \textbf{\textsc{k-NN}}.  
\end{enumerate}
\vspace{-0.1in}
	

	
	

For the data sanitization defenses, we run them on the poisoned data in the original input space; then, apply the basic SVM algorithm, \textsc{linear SVM} or \textsc{rbf SVM} (if using RBF kernel),  on the cleaned data to compute their final solutions.

\vspace{-0.15in}
\begin{table}[ht]	
	\centering
	\caption{Datasets}	
	\label{tab:1}       
	\vspace{-0.1in}
	\begin{tabular}{lcc}		
		\hline\noalign{\smallskip}		
		Dataset & Size & Dimension  \\		
		\noalign{\smallskip}\hline\noalign{\smallskip}		
		\textsc{Synthetic} & $10000$ & $50$-$200$  \\	
		\textsc{letter} & $1520$ & $16$ 	\\
		\textsc{mushrooms} & $8124$ & $112$  \\
		\textsc{satimage} & $2236$ & $36$  \\
		\noalign{\smallskip}\hline			
	\end{tabular}
	\label{tab-1}
	\vspace{-0.1in}
\end{table}

\textbf{Datasets.} 
We consider both the synthetic and real-world datasets in our experiments. 
For each synthetic dataset, we generate two manifolds in $\mathbb{R}^{d}$, and each manifold is represented by a random polynomial function with degree $d'$ (the values of $d$ and $d'$ will be varied in the experiments). Note that it is challenging to achieve the exact doubling dimensions of the datasets,  and thus we use the degree of the polynomial function as a ``rough  indicator'' for the doubling dimension (the higher the degree, the larger the doubling dimension). In each of the manifolds, we randomly sample $5000$ points; the data is randomly partitioned into $30\%$ and $70\%$ respectively for training and testing, and we report the classification accuracy on the test data.
We also consider three real-world datasets from \citep{journals/tist/ChangL11}. 
The details are shown in Table~\ref{tab-1}.

\textbf{Results.}
First, we study the influence from the intrinsic dimensionality. We set the Euclidean dimensionality $d$ to be $100$ and vary the polynomial function's degree $d'$ from $25$ to $65$ in Figure~\ref{fig-s10} and \ref{fig-s20}. Then, we fix the degree $d'$ to be $40$ and vary the Euclidean dimensionality $d$ in Figure~\ref{fig-s11} and \ref{fig-s21}. We can observe that the accuracies of most methods dramatically decrease when the degree $d'$ (intrinsic dimension) increases, and the influence from the intrinsic dimension is more significant than that from the Euclidean dimension, which is in agreement with our theoretical analysis. 


We also study their classification performances under different poisoned fraction in Figure~\ref{fig-s12} and \ref{fig-s22}. We can see that all the defenses yield lower accuracies when the poisoned fraction increases, while the performance of \textsc{DBSCAN} keeps much more stable compared with other defenses. 
Moreover, we calculate the widely used $F_1$ scores from the sanitization defenses for identifying the outliers.  \textsc{Loss} and \textsc{k-NN} both yield very low $F_1$ scores ($<0.1$); that means they are not quite capable to identify the real poisoning data items. The $F_1$ scores yielded by \textsc{DBSCAN}, \textsc{L$2$} and \textsc{Slab} are shown in Figure~\ref{fig-30}-\ref{fig-32}, where \textsc{DBSCAN} in general outperforms the other two sanitization defenses for most cases.

We also perform the experiments on the real  datasets under \textsc{Min-Max} attack  with the poisoned fraction ranging from $4\%$ to $10\%$.  
The experimental results (Figure~\ref{fig-ex1}) reveal the similar trends as the results for the synthetic datasets, and \textsc{DBSCAN} keeps considerably better performance compared with other defenses. Due to the space limit, the results under \textsc{ALFA} attack are shown in Appendix.

 \vspace{-0.05in}

\section{Discussion}
\vspace{-0.05in}
In this paper, we study two different strategies for protecting SVM against poisoning attacks. 
%
%
%
We also have several open questions to study in future. 
For example, what about the complexities of other machine learning problems under the adversarially-resilient formulations as Definition~\ref{def-svm}? 
 For many  other adversarial machine learning problems,  the study on their complexities is still in its infancy.

\newpage

\bibliography{stability}
\newpage
\appendix
\begin{figure*}[h]
	\centering
	\subfloat[\textsc{Letter}]{\includegraphics[height=1.2in]{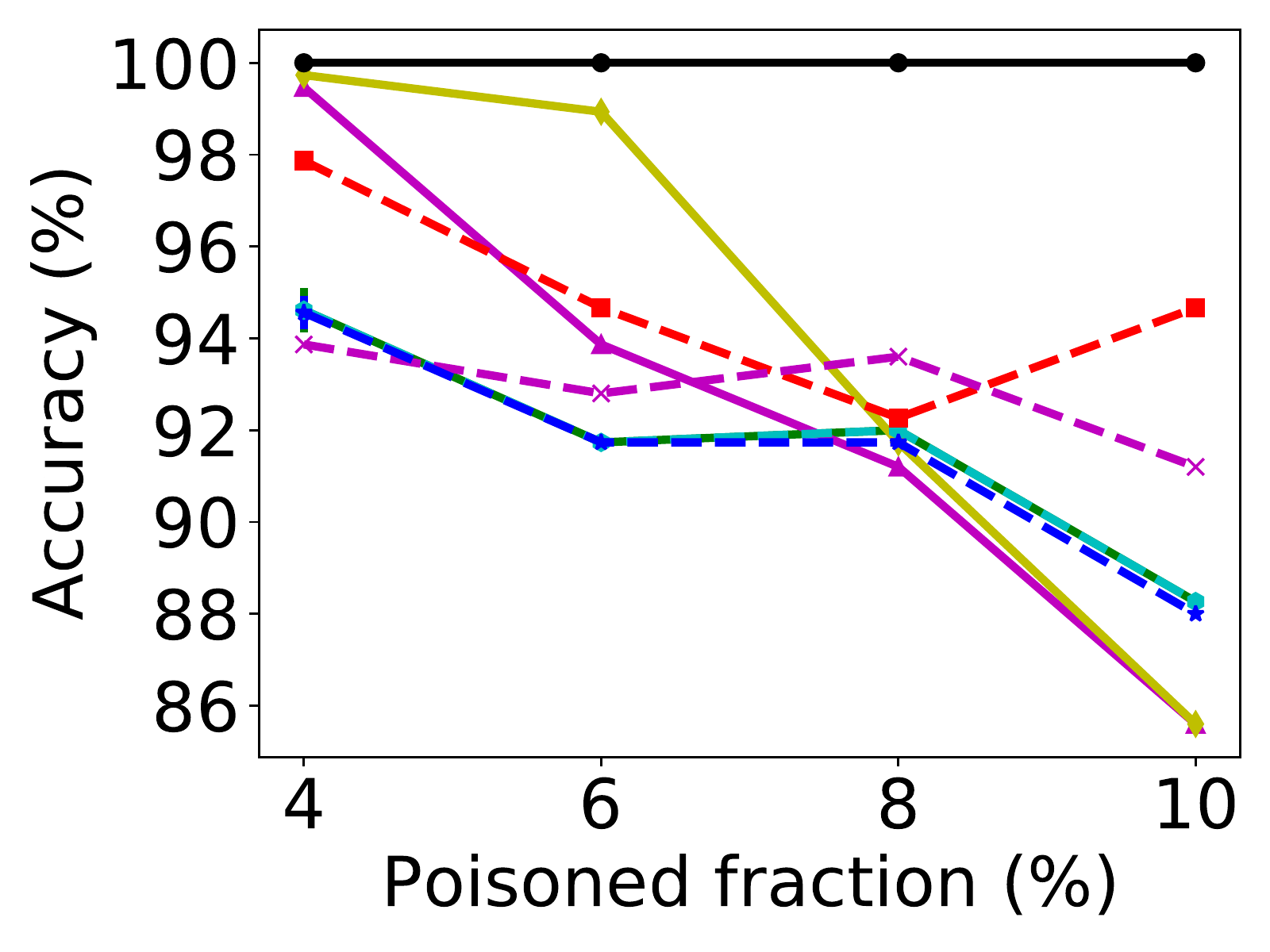}\label{fig-ex32}}
		\hspace{0.4in}	
	\subfloat[\textsc{Mushrooms}]{\includegraphics[height=1.2in]{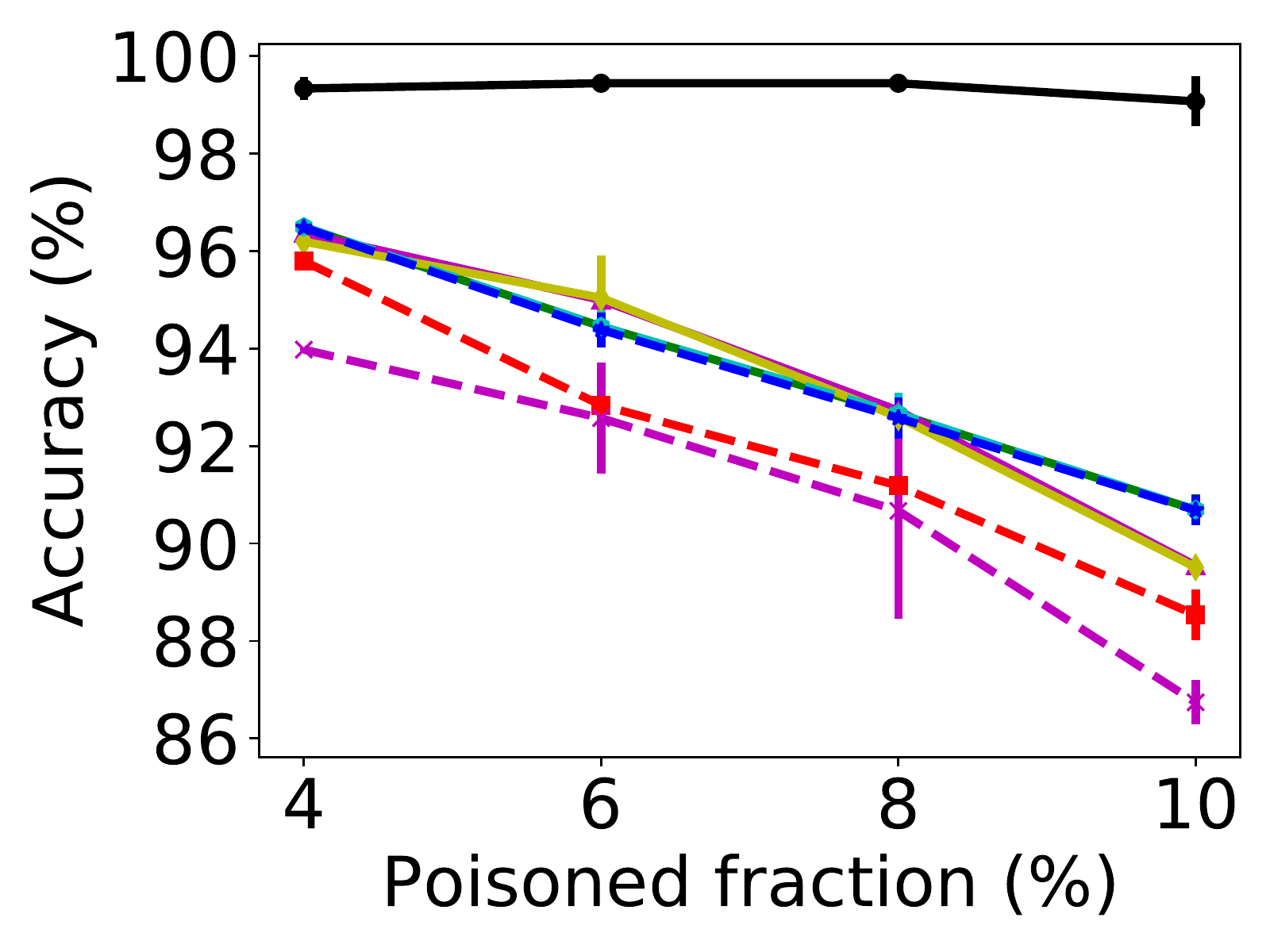}\label{fig-ex33}}
		\hspace{0.4in}	
	\subfloat[\textsc{Satimage}]{\includegraphics[height=1.2in]{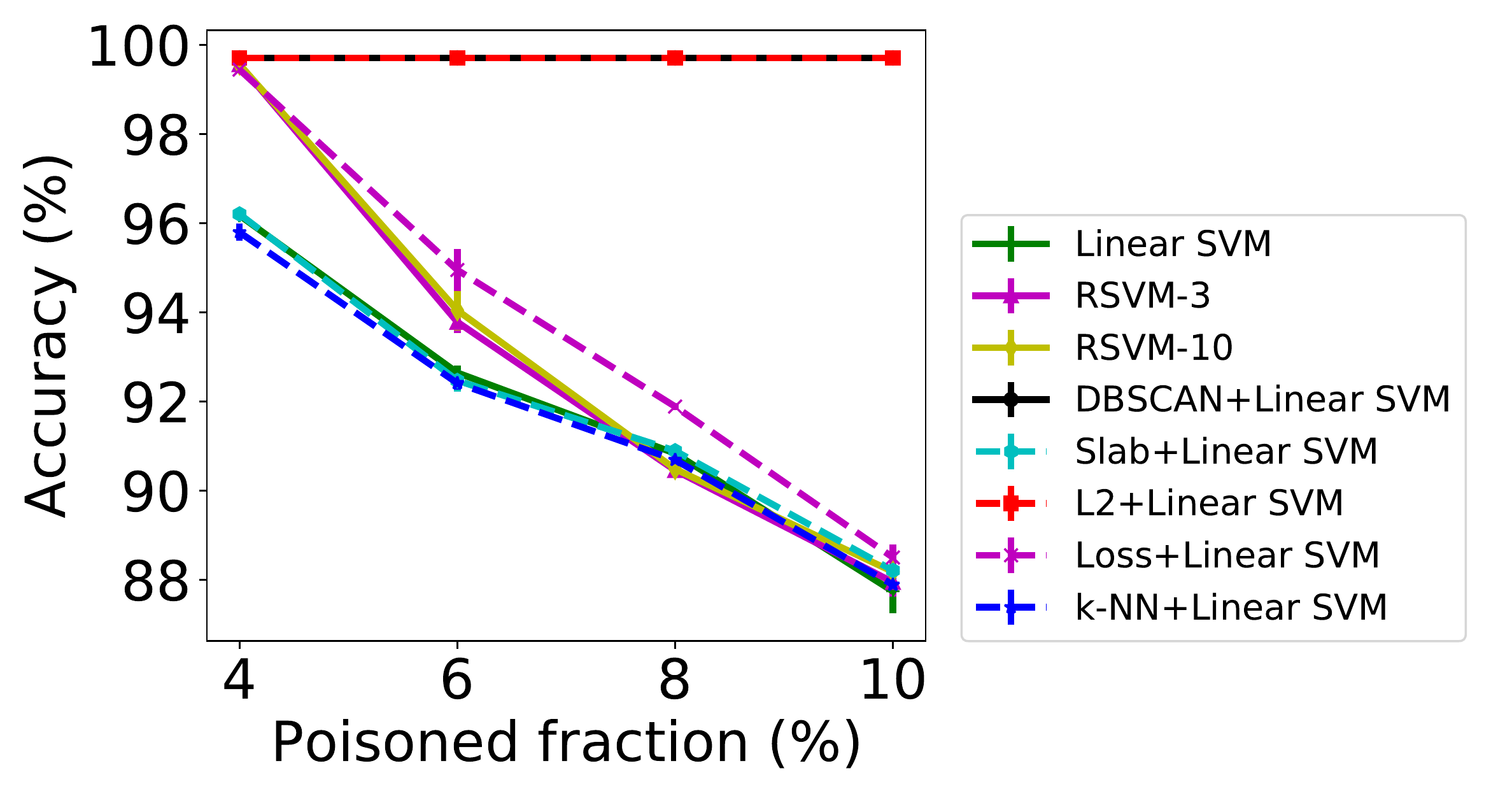}\label{fig-ex34}}
	\caption{The  classification accuracy of linear SVM under \textsc{ALFA} attack.}
	\label{fig-app-ex3}
\end{figure*}

\begin{figure*}[]
	\centering
	\subfloat[\textsc{Letter}]{\includegraphics[height=1.2in]{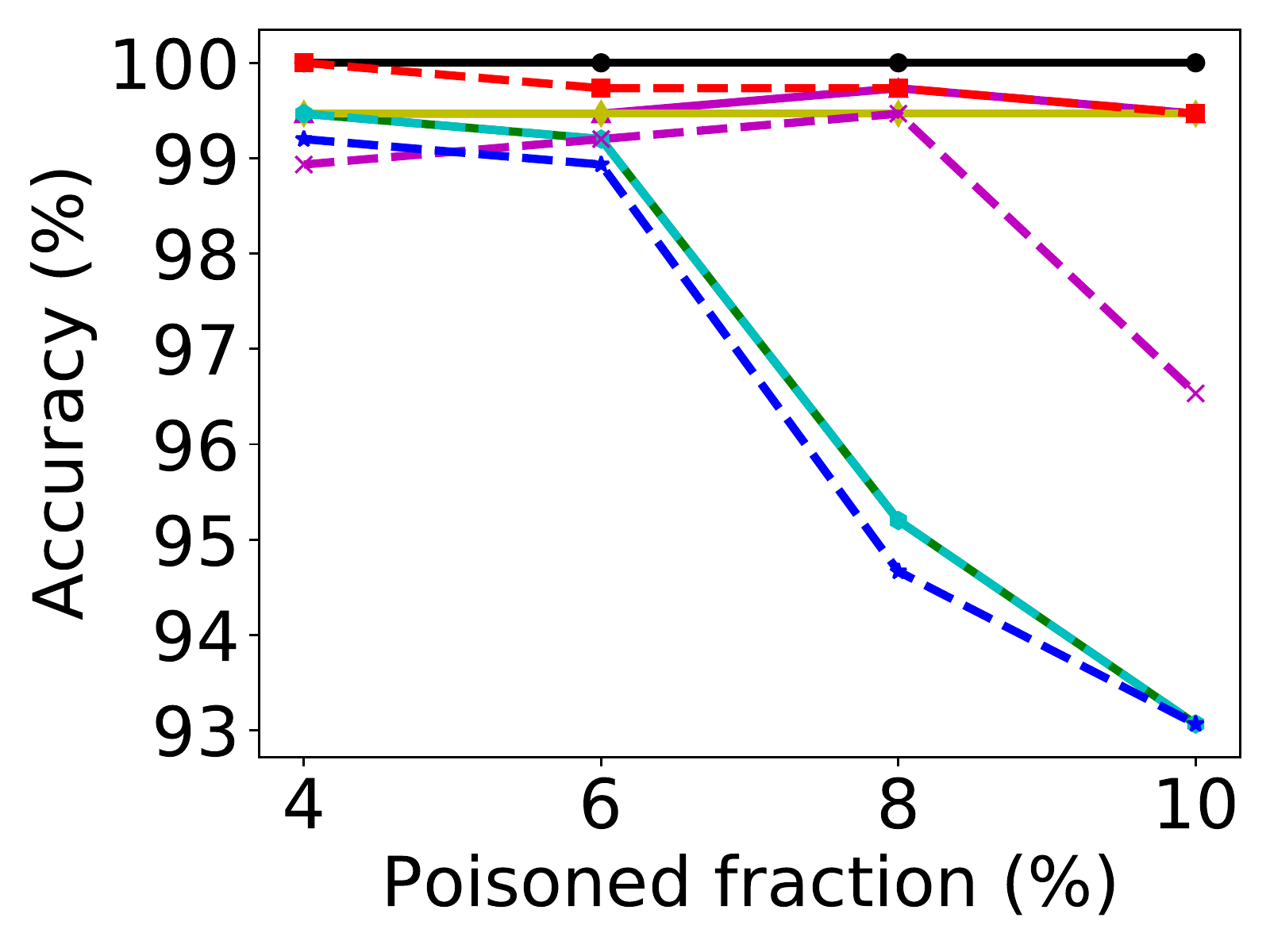}\label{fig-ex42}}
		\hspace{0.4in}	
	\subfloat[\textsc{Mushrooms}]{\includegraphics[height=1.2in]{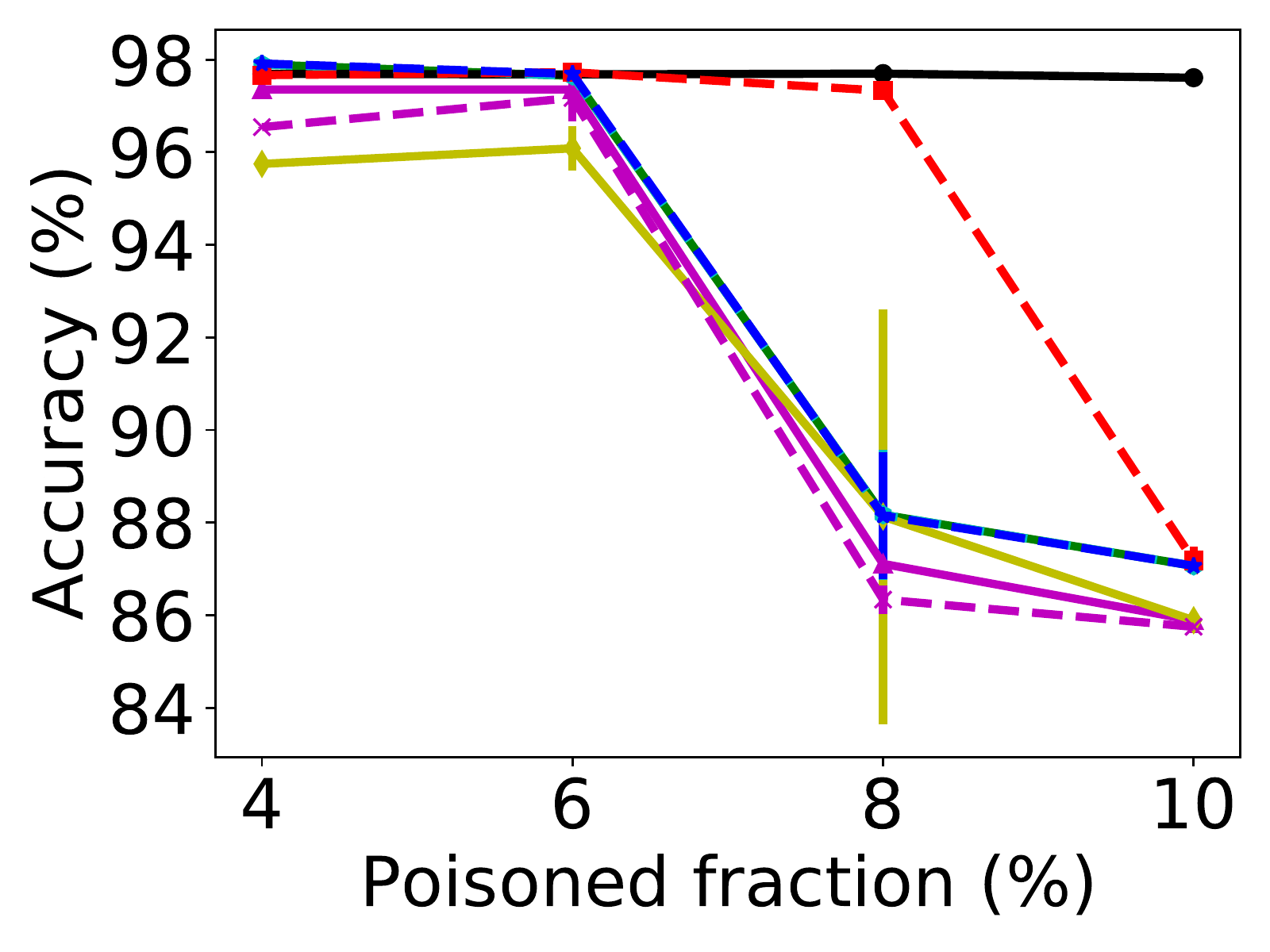}\label{fig-ex43}}
		\hspace{0.4in}	
	\subfloat[\textsc{Satimage}]{\includegraphics[height=1.2in]{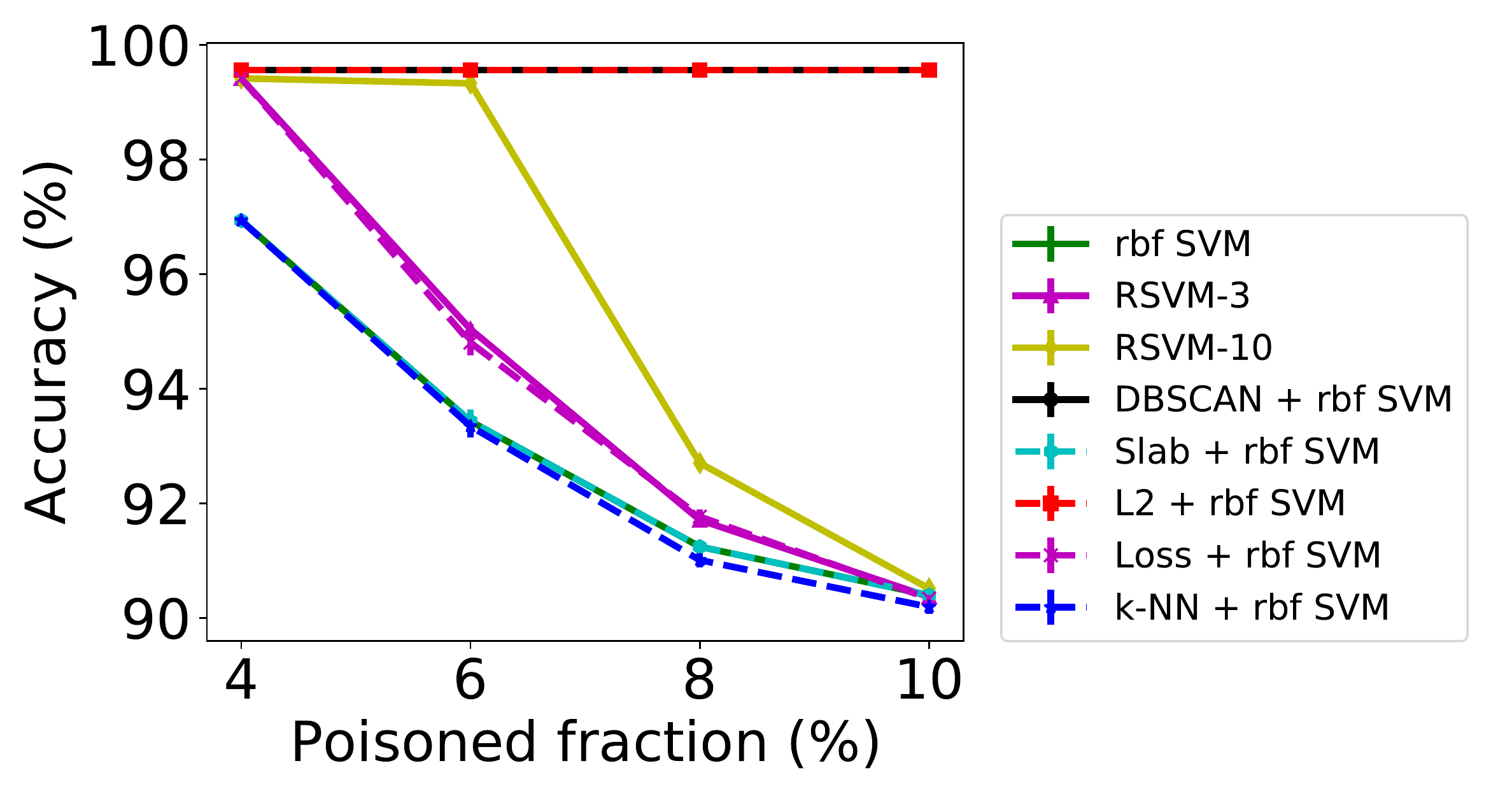}\label{fig-ex44}}
	\caption{The  classification accuracy of SVM with RBF kernel under \textsc{ALFA} attack.}
	\label{fig-app-ex4}
\end{figure*}

\begin{table*}[ht]  	
	\centering  
	\begin{threeparttable}  
		\caption{$F_1$ scores on \textsc{Letter} dataset. }  
		\label{tab:2}  
		\begin{tabular}{ccccccccc}  
			\toprule  
			\multicolumn{4}{c}{\textsc{ALFA}}&\multicolumn{4}{c}{\textsc{Min-Max}}\cr 
			\cmidrule(lr){2-5} \cmidrule(lr){6-9}  
			 & $4\%$ & $6\%$ & $8\%$ & $10\%$ & $4\%$ & $6\%$ & $8\%$ & $10\%$\cr
			\midrule  
			\textsc{DBSCAN} & $\bf{1.00}$ & $\bf{1.00}$ & $\bf{1.00}$ & $\bf{1.00}$ & $\bf{1.00}$ & $\bf{1.00}$ & $\bf{1.00}$ & $\bf{1.00}$ \cr		
			\textsc{Slab} & $<0.1$ & $<0.1$ & $<0.1$ & $<0.1$ & $<0.1$ & $<0.1$ & $<0.1$ & $<0.1$ \cr
			\textsc{L$2$} & $0.40$ & $0.46$ & $0.39$ & $0.60$ & $<0.1$ & $<0.1$ & $<0.1$ & $<0.1$ \cr	
			\textsc{Loss} & $<0.1$ & $0.54$ & $0.70$ & $0.59$ & $<0.1$ & $<0.1$ & $0.19$ & $<0.1$ \cr	
			\textsc{Knn} & $<0.1$ & $<0.1$ & $<0.1$ & $<0.1$ & $<0.1$ & $<0.1$ & $<0.1$ & $<0.1$ \cr 
			\bottomrule  
		\end{tabular}  
	\end{threeparttable} 
	\label{tab-2} 
\end{table*}  

\begin{table*}[ht]  	
	\centering  
	\begin{threeparttable}  
		\caption{$F_1$ scores on \textsc{Mushroom} dataset. }  
		\label{tab:3}  
		\begin{tabular}{ccccccccc}  
			\toprule  
			\multicolumn{4}{c}{\textsc{ALFA}}&\multicolumn{4}{c}{\textsc{Min-Max}}\cr 
			\cmidrule(lr){2-5} \cmidrule(lr){6-9}  
			& $4\%$ & $6\%$ & $8\%$ & $10\%$ & $4\%$ & $6\%$ & $8\%$ & $10\%$\cr
			\midrule  
			\textsc{DBSCAN} & $\bf{0.72}$ & $\bf{0.79}$ & $\bf{0.84}$ & $\bf{0.86}$ & $\bf{0.72}$ & $\bf{0.79}$ & $\bf{0.84}$ & $\bf{0.87}$ \cr		
			\textsc{Slab} & $<0.1$ & $<0.1$ & $<0.1$ & $<0.1$ & $0.17$ & $0.22$ & $0.27$ & $0.30$ \cr
			\textsc{L$2$} & $0.34$ & $0.37$ & $0.40$ & $0.40$ & $0.67$ & $0.66$ & $0.65$ & $0.69$ \cr	
			\textsc{Loss} & $0.11$ & $0.60$ & $0.37$ & $<0.1$ & $0.14$ & $0.28$ & $0.37$ & $<0.1$ \cr	
			\textsc{Knn} & $<0.1$ & $<0.1$ & $<0.1$ & $<0.1$ & $0.17$ & $0.11$ & $<0.1$ & $<0.1$ \cr 
			\bottomrule  
		\end{tabular}  
	\end{threeparttable} 
	\label{tab-3} 
\end{table*}  

\begin{table*}[ht]  	
	\centering  
	\begin{threeparttable}  
		\caption{$F_1$ scores on \textsc{Satimage} dataset. }  
		\label{tab:4}  
		\begin{tabular}{ccccccccc}  
			\toprule  
			\multicolumn{4}{c}{\textsc{ALFA}}&\multicolumn{4}{c}{\textsc{Min-Max}}\cr 
			\cmidrule(lr){2-5} \cmidrule(lr){6-9}  
			& $4\%$ & $6\%$ & $8\%$ & $10\%$ & $4\%$ & $6\%$ & $8\%$ & $10\%$\cr
			\midrule  
			\textsc{DBSCAN} & $\bf{1.00}$ & $\bf{1.00}$ & $\bf{1.00}$ & $\bf{1.00}$ & $\bf{0.95}$ & $\bf{0.97}$ & $\bf{0.98}$ & $\bf{0.98}$ \cr	
			\textsc{Slab} & $<0.1$ & $<0.1$ & $<0.1$ & $<0.1$ & $<0.1$ & $<0.1$ & $<0.1$ & $<0.1$ \cr	
			\textsc{L$2$} & $1.00$ & $1.00$ & $1.00$ & $0.98$ & $0.63$ & $0.75$ & $0.88$ & $0.85$ \cr	
			\textsc{Loss} & $0.71$ & $0.52$ & $0.45$ & $0.41$ & $0.34$ & $0.41$ & $0.52$ & $0.44$ \cr	
			\textsc{Knn} & $<0.1$ & $<0.1$ & $<0.1$ & $<0.1$ & $<0.1$ & $0.33$ & $0.52$ & $0.18$ \cr 
			\bottomrule  
		\end{tabular}  
	\end{threeparttable} 
	\label{tab-4} 
\end{table*}

\end{document}